\documentclass{article}
\usepackage[margin=1in]{geometry}
\usepackage[utf8]{inputenc} %
\usepackage[T1]{fontenc}    %

\usepackage{aux/libin_macros}
\usepackage{physics}
\setlength{\parindent}{1.5em}

\title{A note on Linear Bottleneck Networks and their \\
Transition to Multilinearity}

\author{Libin Zhu$^{*\dagger}$ \quad Parthe Pandit$^\dagger$ \quad Mikhail Belkin$^\dagger$}
\date{}
\bibliography{aux/ref.bib}
\begin{document}

\maketitle

\begin{abstract}
  Randomly initialized wide neural networks transition to linear functions of weights as the width grows, in a ball of radius $O(1)$ around initialization. A necessary condition for this result is that all layers of the network are wide enough, i.e., all widths tend to infinity. However, the transition to linearity breaks down when this infinite width assumption is violated. In this work we show that linear networks with a bottleneck layer learn bilinear functions of the weights, in a ball of radius $O(1)$ around initialization. In general, for $B-1$ bottleneck layers, the network is a degree $B$ multilinear function of weights. Importantly, the degree only depends on the number of bottlenecks and not the total depth of the network.
  \end{abstract}

\blfootnote{\noindent $^*$Computer Science and Engineering,  University of California, San Diego}
\blfootnote{$^\dagger$Hal{\i}c{\i}o\u{g}lu Data Science Institute, University of California, San Diego}
\section{Introduction}

For a wide neural network~(WNN), when the network width is sufficiently large, there exists a linear function of parameters, arbitrarily close to the network function, in a ball of radius $O(1)$ in the parameter space around random initialization.
This local linearity explains the equivalence to the neural tangent kernel (NTK) regression for optimizing wide neural networks with small learning rates, first shown in~\cite{jacot2018neural}.
However, an important assumption for this \textit{transition to linearity}~\cite{liu2020linearity} to hold is that each layer must be sufficiently wide. If there is even one narrow ``bottleneck''  hidden layer, resulting in a so-called bottleneck neural network~(BNN), the work~\cite{liu2020linearity} showed that the transition to linearity does not occur. 

An immediate question at this point is, 
\begin{center}
    \textit{What functions of the weights does a neural network with a bottleneck layer represent}?
\end{center}

In this paper we answer this question for linear networks, \ie, networks with linear activations. We show that an arbitrary fixed depth linear network with $B-1$ bottleneck layers transitions to a $B$-th degree polynomial of weight parameters when the width of non-bottleneck layers approaches infinity.
Importantly, the degree of the polynomial is independent of the depth of the network and depends only on the number of bottleneck layers. 

Note that the network function, due to linear activations, is linear in each weight matrix. However, as a function of all weights together, it is a polynomial of degree equal to total number of layers. Our analysis of bottleneck networks shows that, as the width of non-bottleneck layers grows, the degree of this polynomial reduces to the number of bottleneck layers in the network plus 1, and becomes independent 
of the total number of layers. For wide networks without bottlenecks, this recovers the result of transition to linearity of wide neural networks.

In our technical analysis, for a BNN with $B-1$ bottleneck layers, we show that the spectral norm of the $B+1^{\rm st}$ derivative of the network function with respect to parameters, scales as with $1/\sqrt{m}$ where $m$ is the width of non bottleneck layers, but the spectral norm of the $B^{\rm th}$ derivative is $\Omega(1)$.  As a result, when $m$ goes to infinity, the network function transitions to a $B^{\rm th}$ order polynomial of the weights. We further strengthen this claim by showing this polynomial is in fact a multilinear function, where the network function is jointly linear in layer weights between consecutive bottleneck layers.

\subsection{Main contributions}

\begin{enumerate}
    \item Our first result \Cref{thm:bilinear} states that a 4 layer linear network with 1 bottleneck learns bilinear functions. We corroborate this via numerical experiments. See \Cref{sec:one_bottleneck}.
    \item We generalize the above result to show that linear networks with $B-1$ bottlenecks transition to $B^{\rm th} $degree polynomial functions, independent of the total number of layers. See \Cref{sec:multi}.
\end{enumerate}

\paragraph{Linearity of WNNs does not imply bilinearity of BNNs.} 
A cascade of two WNNs results in a BNN. While WNNs are known to transition to linearity for a large class of nonlinear activation functions, their cascade is in general not bilinear or quadratic.  Figure \ref{fig:Perturbation} provides a counter-example where a cascade of two  shallow WNNs with the \texttt{tanh} activation does not result in the network being a quadratic function of weights. We provide a detailed description below.

Consider two infinitely wide WNNs, $\x\mapsto f_1(\W_1;\x)$ and $\x\mapsto f_2(\W_2;\x)$ which can be approximated as a linear models,
\begin{align}
    f_1(\W_1;\vx) \approx \psi_1(\vx) +\inner{\W_1,\phi_1(\x)},\label{eq:approx1}\\
    f_2(\W_2;\vx) \approx \psi_2(\vx) +\inner{\W_2,\phi_2(\x)},\label{eq:approx2}
\end{align}
where  $\W_i$ are trainable parameters, $\x$ is the input, $\psi_i(\vx)$ are fixed offsets, and $\phi_i(\x)$ are feature maps.
The cascade of $f_1$ and $f_2$ is given by $\x\mapsto g(\W_1,\W_2;\x)$ where,
\begin{align}
    g(\W_1,\W_2;\vx)&:= f_2(\W_2;f_1(\W_1;\vx)) \approx
    \psi_2(f_1(\W_1;\x)) + \inner{\W_2,\phi_2(f_1(\W_1;\x))}\\
    &\approx \psi_2\Big({\psi_1(\x)+\inner{\W_1,\phi_1(\x)}}\Big) + \inner{\W_2,\phi_2\Big(\psi_1(\x)+\inner{\W_1,\phi_1(\x)}\Big)}\label{eq:approx3}
\end{align}
Note that $\psi_2$ and $\phi_2$ are, in general, nonlinear in their input. Hence $g$ is not bilinear in $(\W_1,\W_2)$.

\paragraph{Consequence of linear activations.}
However, when the activations of $f_1$ and $f_2$ are linear, we know that $\psi_2$ and $\phi_2$ are linear in their inputs.
Consequently, $g$ could be bilinear in $(\W_1,\W_2),$ if the approximation errors in equations (\ref{eq:approx1}-\ref{eq:approx3}) can be controlled together. 
Figure \ref{fig:Perturbation} details an experiment where this bilinearity is observed.

In this work, we rigorously prove that $g$ is well approximated with a bilinear model. We also provide a generalization to the case where a $B-1$ bottleneck network, which is a cascade of $B$ linear networks, is well approximated with a multilinear model with degree $B.$

\subsection{Background and related works}

Nearly linear training dynamics of Wide Neural Networks~(WNNs) allows an accurate analysis on the network evolution during training, especially with gradient descent and its variants ~\cite{du2018gradientdeep,liu2020loss,zou2019improved,allen2019convergence}. Besides, the complexity of WNNs  can be controlled in a ball of parameters with finite radius. Hence some generalization bounds were developed for shallow neural networks~\cite{arora2019fine} as well as deep ones~\cite{cao2019generalization}. This literature applies only to WNNs does not provide much insight into the behavior of bottleneck neural networks~(BNNs).

BNNs have been shown to be effective in many vision, speech and language tasks~\cite{yu2011improved,matejka2014neural,he2016deep}. Yet, we lack understanding for the training, optimization and generalization behavior of such networks. Some specific network architectures, such as  autoencoders~\cite{lecun2015deep}, are BNNs. 

Linear neural networks are non-linear in parameters and hence provide an analytically friendly way to draw insights about non-linear networks. For example, \cite{moroshko2020implicit} explored the transition between the ``kernel'' and ``non-kernel'' regimes on diagonal linear networks, which had been seen in wide non-linear neural networks~\cite{chizat2019lazy}.  The convergence of gradient descent was analyzed on deep linear neural networks in~\cite{arora2018convergence,du2019width,bartlett2018gradient}.  Linear networks also shed light on understanding implicit regularization~\cite{ji2018gradient,radhakrishnan2020alignment, gunasekar2018implicit, gidel2019implicit}  and matrix factorization~\cite{arora2019implicit,tarmoun2021understanding}. 

The optimization of bilinear models, though non-convex, has been well studied in previous works \cite{netrapalli2013phase, jain2013low} with several convergence guarantees. 
Our work connects BNNs to bilinear models, leaving open the question of optimization in learning BNNs. 

\begin{figure}[h!]
\centering
\begin{subfigure}[t]{0.49\textwidth}
\includegraphics[width=\linewidth]{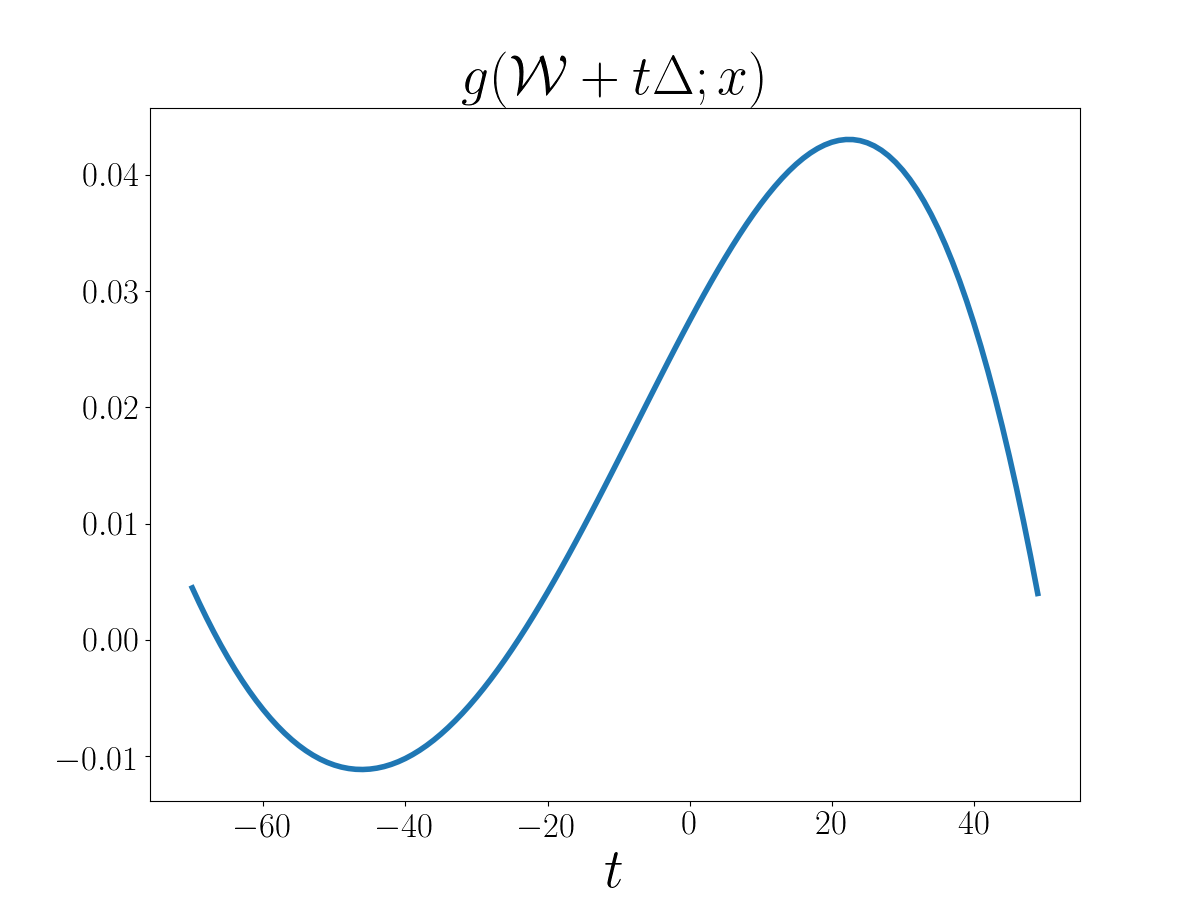}
     \caption{
     \label{fig:nonlinear}
     {Non-quadratic behaviour} with \texttt{tanh} activation.
     }
    \end{subfigure}
    \begin{subfigure}[t]{0.49\textwidth}
        \includegraphics[width=\linewidth]{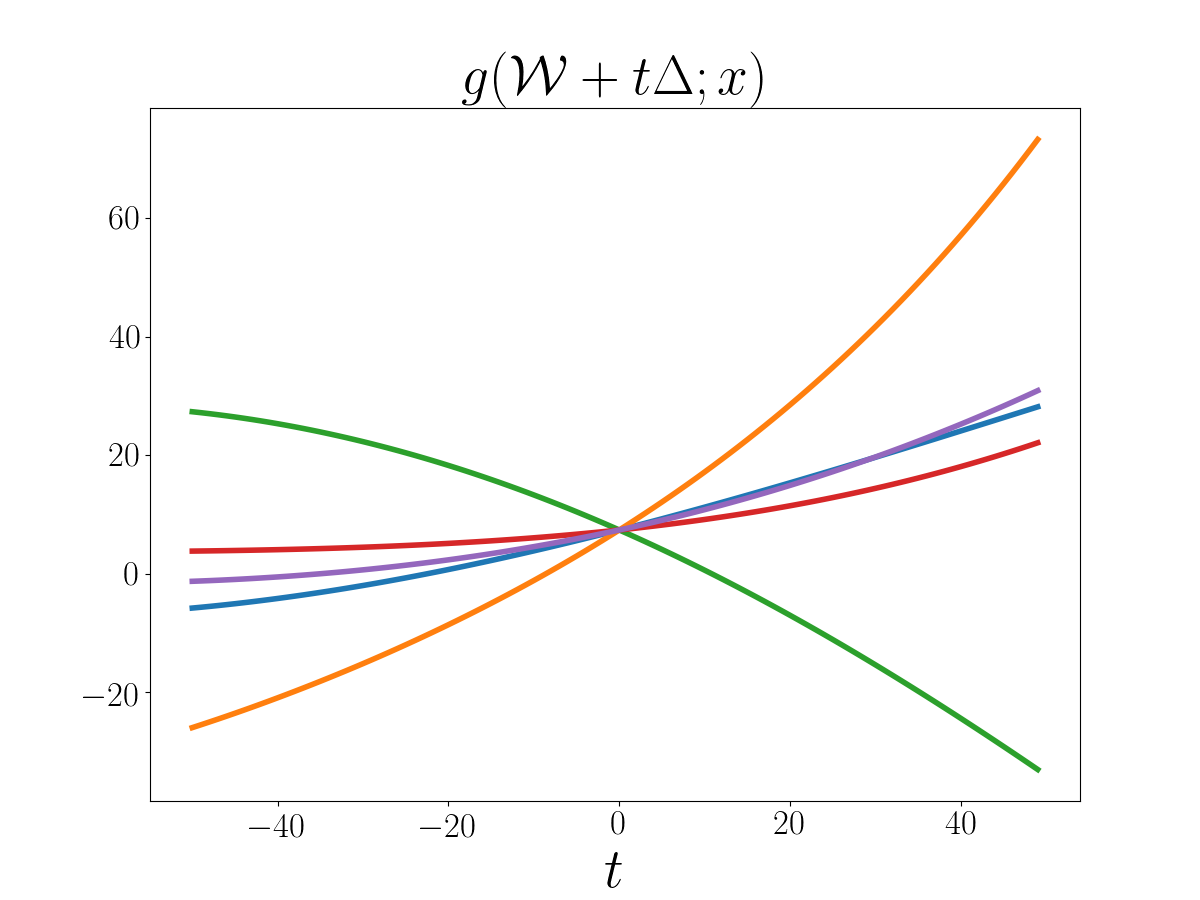}
    \caption{\label{fig:bilinear}
    {Quadratic behavior} with \texttt{identity} activation.
    }
    \end{subfigure}\\
    \begin{subfigure}[t]{0.49\textwidth}
        \includegraphics[width=\linewidth]{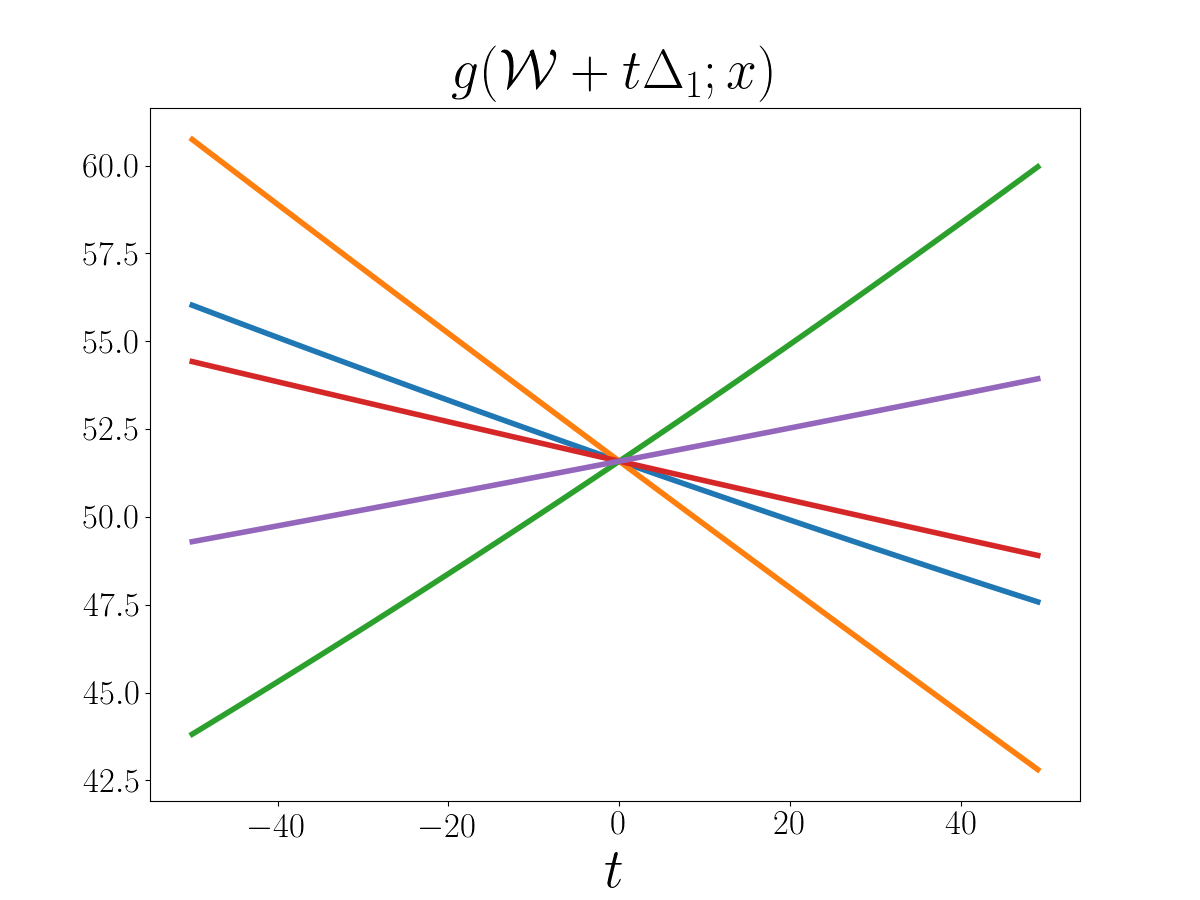}
        \caption{\label{fig:bilinear_1} Linearity in weights $\W_1$.}
    \end{subfigure}
        \begin{subfigure}[t]{0.49\textwidth}
        \includegraphics[width=\linewidth]{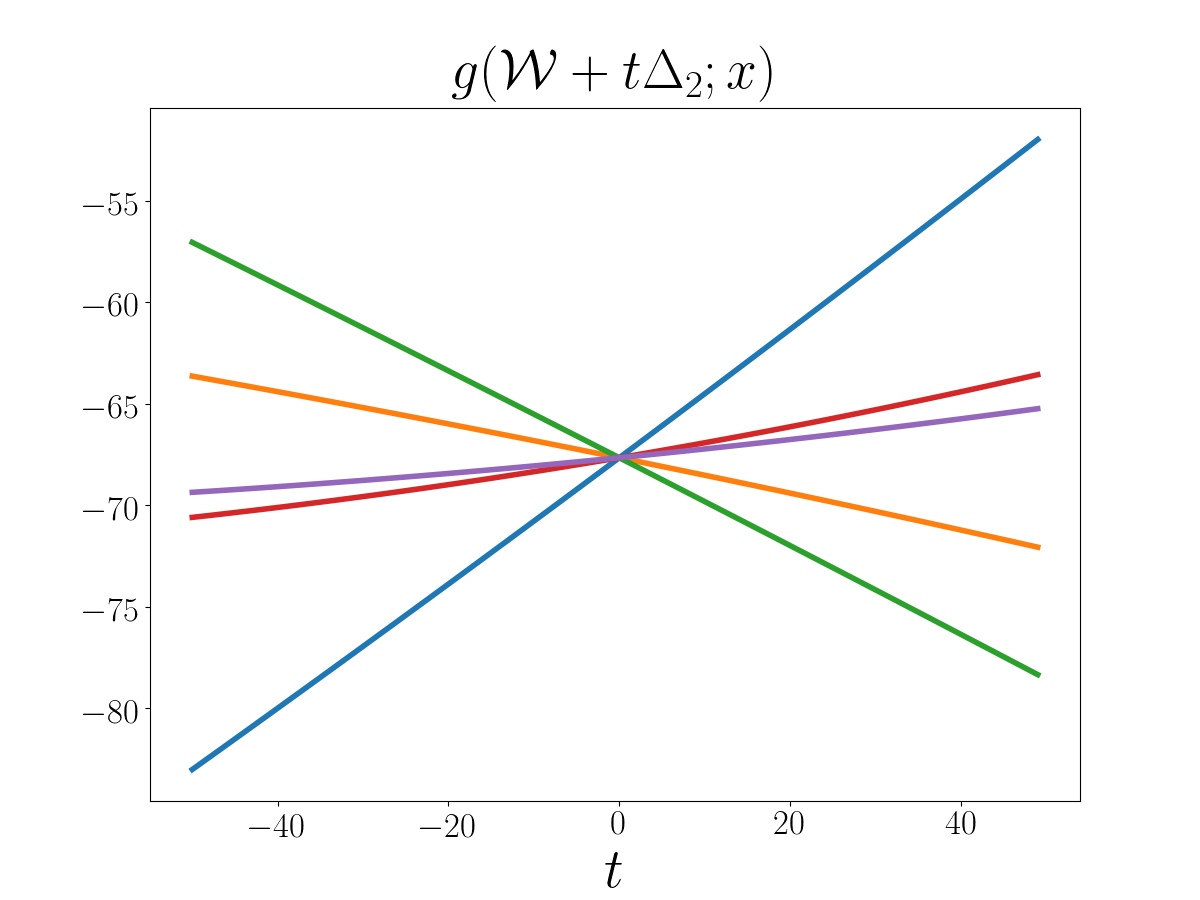}
        \caption{\label{fig:bilinear_2} Linearity in weights $\W_2$.}
    \end{subfigure}

\caption{\label{fig:Perturbation}
We consider a 4 layer neural network, with 1 input unit, 1 output unit, and hidden layer widths \texttt{[$10^4, 1, 10^4$]}, \ie, the network has 1 bottleneck layer. The network weights are $\Wc=(\W_1,\W_2),$ where $\W_1=(W_1^{(1)}, W_1^{(2)})$ are weights before the bottleneck, and $\W_2=(W_2^{(1)}, W_2^{(2)})$ are weights after.\newline
\textbf{Network function along a 1D subspace:} For a fixed input $\x$, we plot the output $g(\Wc+t\Delta,\x)$ v/s $t$, as the weights are varied along a randomly chosen direction $\Delta$ of the parameter space, under different settings. \newline
{ Figure (\ref{fig:nonlinear}): \texttt{tanh} activation.} The inflection point proves the network is not a quadratic function of weights.\newline
{
Figure (\ref{fig:bilinear}): \texttt{identity} activation.} The plot shows the network function along 5 directions. The network function exhibits quadratic behavior along each perturbation direction $\Delta$. Theorem \ref{thm:bilinear} provides a rigorous statement and proof of this quadratic behavior. 
In fact, this quadratic function is actually a bilinear function of $\Wc,$ which is linear in $\W_1$, and linear in $\W_2$, as demonstrated by figures (\ref{fig:bilinear_1}-\ref{fig:bilinear_2}).
\newline
{{Figure (\ref{fig:bilinear_1}): \texttt{identity} activation.}} Here we restrict perturbations to directions $\Delta_1$ that align with $(\W_1, \zero)$. The network function is linear in $\W_1$. A rigorous statement and proof is provided in Theorem \ref{thm:bilinear}.\newline
{ Figure (\ref{fig:bilinear_2}):} same setting as (\ref{fig:bilinear_1}) with directions $\Delta_2$ that align with $(\zero,\W_2)$. The network is linear in $\W_2$
}
\end{figure}

\section{Preliminaries}

We use  lowercase bold letters, $\rvw$, for vectors, uppercase letters, $W$, for matrices, uppercase bold letters, $\W$, for tuples of vectors or matrices, calligraphic uppercase letters $\mc{W}$ for tuples of tuples, and sans serif letters $\mathsf{W}$ for tensors. We denote the set $\{1,2,\cdots,n\}$ by $[n]$ for all $n\in\Natural$. We use $\inner{\cdot,\cdot}$ as the standard inner product for these objects, and the norms $\norm{\w},\norm{W},\norm{\W},\norm{\Wc},\norm{\mathsf{W}}$ to denote their respective Frobenius norms. We use $\opnorm{\mathsf{A}}$ to denote the operator norm of a linear operator represented as a tensor $\mathsf{A}$. We use $\otimes$ to denote the Kronecker product and outer product. Given a vector $\rvw\in\mathbb{R}^p$, we denote the Kronecker product of $n$ copies of $\rvw$ by $\rvw^{\otimes n}$ which is a rank-1 tensor of order $n$.

We denote by $\frac{\partial^k}{\partial \w_0^k} f(\w_0)$ the $k^{\rm th}$ derivative of $f(\w)$ evaluated at $\w_0$. This is a tensor of order $k$.

\paragraph{Degree-$n$ polynomial.} A function  $f(\rvw):\mathbb{R}^m\rightarrow \mathbb{R}$ is said to be a degree-$k$ polynomial if 
\begin{align}
    \opnorm{\frac{\partial^{k}f(\rvw)}{\partial\rvw^{k}}} > 0\quad\mathrm{and}\qquad\opnorm{\frac{\partial^{k+1}f(\rvw)}{\partial\rvw^{k+1}}} = 0, ~~~{\forall\w \in\mathbb{R}^m.}
\end{align}
Note that $\frac{\partial^{k}f(\rvw)}{\partial\rvw^{k}}$ is a tensor and the norm above is defined as in \cref{def:tensor_norm}.

\begin{definition}[Tuple product]\label{def:tuple_prod}
For a tensor $\tensor{A} \in \mathbb{R}^{r_1\times r_2\times\ldots\times r_k}$ and a tuple of vectors  $\V=(\rvv^1,\rvv^2,\ldots,\rvv^k)$ where $\rvv^i\in\mathbb{R}^{r_i}$ for $i\in[k]$, we denote
\begin{align}
    \innert{\tensor{A}, \V} = \sum_{i_k=1}^{r_k}\cdots\sum_{i_1=1}^{r_1} \tensor{A}_{i_1,i_2,\cdots,i_k}v^1_{i_1}v^2_{i_2}\ldots v^k_{i_k}.
\end{align}
Another way to interpret this is $\innert{\tensor{A}, \V} = \inner{\tensor{A},\tensor{V}}$ where $\tensor V = \rvv^1 \otimes \rvv^2\otimes\cdots\otimes \rvv^k $ is a rank-1 tensor. 
\end{definition}
\begin{definition}[Tensor spectral norm]
Given a tensor $\tensor{A} \in \mathbb{R}^{r_1\times r_2\cdots r_k}$, its spectral norm is defined as
\begin{align}\label{def:tensor_norm}
    \opnorm{\tensor{A}} = \sup_{\norm{\rvv^i} =1, \forall i\in[k] }\innert{\tensor{A},\V} = \sup_{\norm{\V} =1}\innert{\tensor{A},\V} = \sup_{\underset{\rank{\tensor{V}}=1}{\norm{\tensor V} =1}}\inner{\tensor{A},\tensor V},
\end{align}
where $\V=(\rvv^1,\rvv^2,\ldots,\rvv^k)$ and $\rvv^i\in\mathbb{R}^{r_i}$ for $i\in[k]$.
\end{definition}

\begin{definition}[2-norm ball]\label{def:2norm_ball}
The 2-norm ball $\mathbb{B}(\mc{V}_\init,R)$ is defined as:
\begin{align*}
    \mathbb{B}(\mc{V}_\init,R):=\left\{\mc{V} = (V_1,V_2,\ldots,V_k)\ \bigg|\  \norm{V_{i,\init} - V_i}\leq R, \forall i\in[k] \right\}
\end{align*}
\end{definition}

\begin{definition}[Taylor expansion]\label{def:taylor}
Consider a function $f(\rvw):\mathbb{R}^m \rightarrow \mathbb{R}$ and suppose $\frac{\partial^{n+1}}{\partial\rvw^{n+1}}f(\rvw)$ is continuous in the neighborhood  $\mathcal{U}\subset \mathbb{R}^m$ of a given point $\rvw_0$. Then in $\mathcal{U}$, $f(\rvw)$ can be written as
\begin{align}
    f(\rvw) = f(\rvw_0) + \sum_{k=1}^n\frac{1}{k!} \inner{\eval{\frac{\partial^k}{\partial\rvw^k} f(\rvw)}_{\rvw=\rvw_0}, (\rvw-\rvw_0)^{\otimes k}} + 
    R_{n+1}(\rvw,{\xivec}),
\end{align}
where $\xivec \in \mathcal{U}$.
\end{definition}

\begin{definition}[$n$-jet]\label{def:jet}
The $n$-jet of $f$ at the point $\rvw_0$ is defined to be the polynomial
\begin{align}
    (J_{\rvw_0}^n f)(\rvv)= f(\rvw_0) + \sum_{k=1}^n\frac{1}{k!} \inner{\eval{\frac{\partial^k}{\partial \w^k} f(\rvw)}_{\rvw=\rvw_0}, \rvv^{\otimes k}}.
\end{align}
\end{definition}

Therefore, there exists a $\xivec$ such that
\begin{align}
    \tag{Mean Value Theorem}
    \MoveEqLeft f(\rvw) =  (J_{\rvw_0}^n f)(\rvw-\rvw_0) + R_{n+1}(\rvw,{\xivec}),\\
    \MoveEqLeft R_{n+1}(\rvw, {\xivec}) := \frac{1}{(n+1)!} \inner{\eval{\frac{\partial^{n+1}}{\partial \w^{n+1}} f(\rvw)}_{\rvw=\xivec}, (\rvw-\rvw_0)^{\otimes n+1}}.\label{eq:def:remainder}
\end{align}

\begin{definition}[Multilinear function]\label{def:multilinear}
A function $$f(\rvw_1,\rvw_2,\cdots,\rvw_k):\mathbb{R}^{m_1}\times\mathbb{R}^{m_2}\times\cdots R^{m_k}\rightarrow \mathbb{R}^c$$  is said to be a multilinear function in $k$ variables, or $k$-linear function, if $\forall i\in[k]$, $f$ is linear in $\rvw_i$,  assuming all variables but $\rvw_i$ are held constant.
\end{definition}

\subsection{Bottleneck Networks}

We introduce some notations related to bottleneck neural networks. These are summarized in \Cref{fig:bnn} and detailed in the equations below.

\begin{figure}
    \centering
    \includegraphics[width=\linewidth]{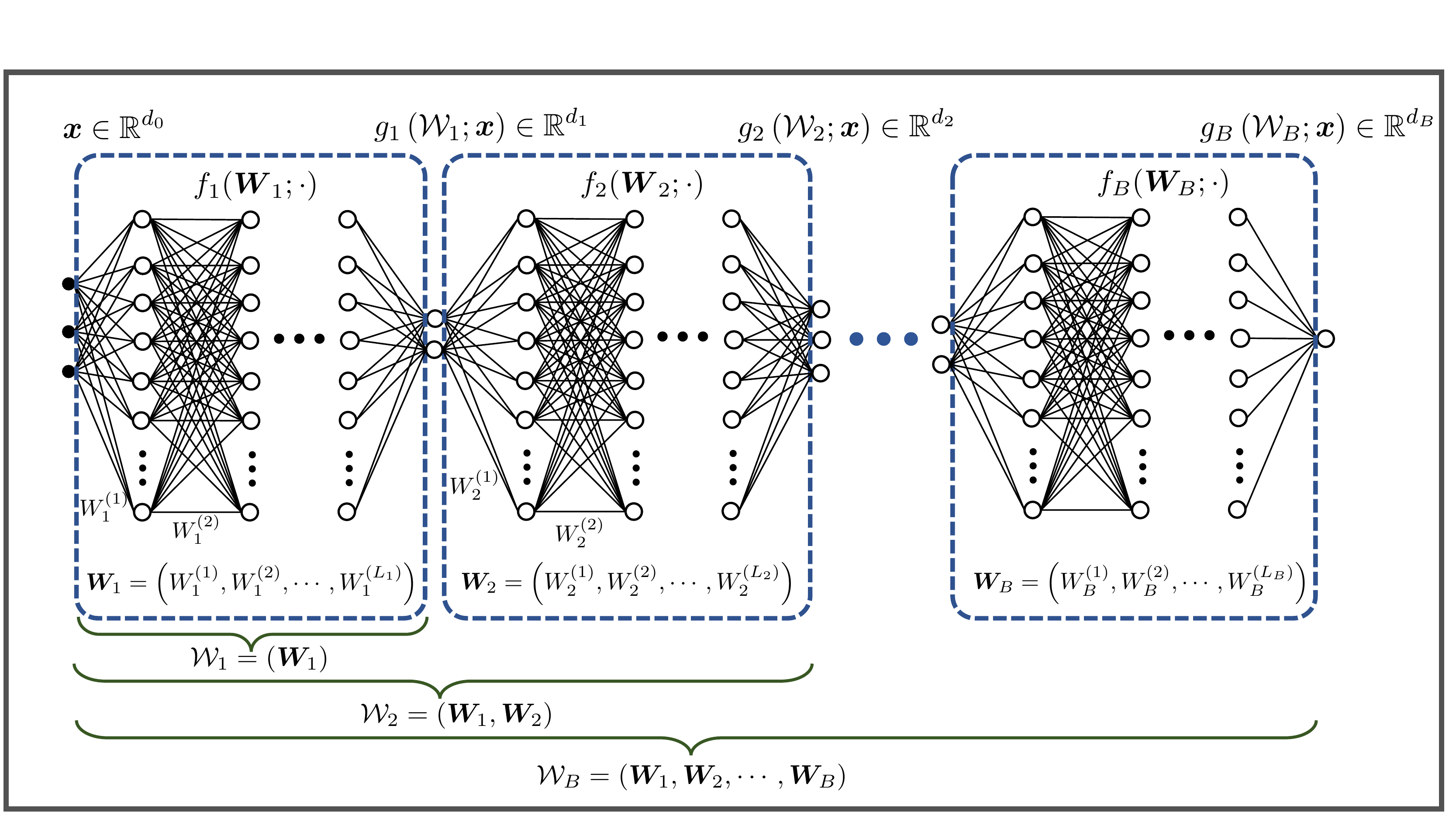}
    \caption{A bottleneck neural network with $B-1$ bottlenecks.}
    \label{fig:bnn}
\end{figure}

\begin{definition}[Wide Neural Network (WNN)]\label{def:wide_neural_network}
An $L$-layer WNN, $f(\rmW;\vx)\in\mathbb{R}^c$, with width $m$ is defined to be
\begin{align}
    f(\rmW;\vx) = \frac{1}{\sqrt{m}}W^{(L)}\frac{1}{\sqrt{m}}W^{(L-1)}...\frac{1}{\sqrt{d}}W^{(1)}\vx,
\end{align}
where $\rmW:=\left(W^{(1)},W^{(2)},...,W^{(L)}\right)$ are trainable parameters with $W^{(\ell)}\in\mathbb{R}^{m\times m}$ for $\ell\in\cbrac{2,3,\ldots,L}$,  $W^{(1)} \in \mathbb{R}^{m\times d}$,  $W^{(L)}\in\mathbb{R}^{c\times m}$, and $\vx \in \mathbb{R}^{d}$ is the input.
\end{definition}

A bottleneck network with $(B-1)$ bottlenecks can be viewed as a cascade of $B$ wide networks. Hence we define it in a recursive manner (see an illustration in Figure~\ref{fig:bnn}):
\begin{definition}[Bottleneck Neural Network (BNN)]\label{def:bottleneck_neural_network}
A BNN, $g(\mc W,\vx)$, with $B-1$ bottleneck layers is a recursion of $B$ WNNs $\{f_b\}_{b=1}^B$,
where each $f_b(\rmW_b;\cdot): \mathbb{R}^{d_{b-1}} \rightarrow \mathbb{R}^{d_b}$ is an $L_b$-layer WNN with weights $\rmW_b = \left(W_b^{(1)},\cdots ,W_b^{(L_b)}\right)$ for $i\in[L]$, and $\vx\in\mathbb{R}^d$ is the input. The trainable parameters of $g$ are,
\begin{align}
 \mc{W} = \mc{W}_B \qquad\text{where}\quad \mathcal{W}_b := {(\W_1,\W_2,\ldots,\W_b)\qquad b=1,2,\ldots,B}.
\end{align}
\end{definition}
\begin{remark}
Note that for $B=1,$ a BNN is simply a WNN. One can think of $g$ as a single neural network, however $d_b\ll m$ for all $b\in\{0,1,2,\ldots,B-1\}$, which justifies the name \textit{bottlenecks}.
\end{remark}

\paragraph{Network initialization.} In this paper, we initialize each trainable weight parameter of a neural network by drawing independent and identically distributed samples from the standard normal distribution $\mathcal{N}(0,1)$. This initialization is also referred as NTK initialization~\cite{jacot2018neural}.

By \Cref{def:2norm_ball}, the 2-norm ball for $\mc{W}$ around initialization is defined as:
\begin{align*}
    \mathbb{B}(\mathcal{W}_{\init},R)=\cbrac{\Wc=\rbrac{\rbrac{W_b^{(\ell_b)}}_{\ell_b=1}^{L_b}}_{b=1}^B\ \bigg|\ 
    \norm{W_b^{(\ell)} - W_{b,\init}^{(\ell)}}
    \leq R, \quad\ell\in[L_b],b\in[B].}
\end{align*}

\section{Linear network with one bottleneck}
\label{sec:one_bottleneck}

In this section, we consider a 4-layer linear network with 1 bottleneck layer {of width $r=O(1)$}, \ie, $B=2$ $L_1=L_2=2$ and $d_0=d, d_1=r, d_2=k$ in the notation described above. We show that this BNN transitions to a bilinear function of weights,  in a ball with radius $O({1})$ around random initialization, when the network width goes to infinity.

We consider the following network 
\begin{align}\label{eq:one_bottleneck}
        \MoveEqLeft g(\mc{W};\vx) = \frac{1}{\sqrt{m}}W_2^{(2)}\frac{1}{\sqrt{r}} W_2^{(1)} \frac{1}{\sqrt{m}}W_1^{(2)}\frac{1}{\sqrt{d}} W_1^{(1)}\vx,\\
        \MoveEqLeft  \mc{W}:=(\W_1,\W_2),\text{ with } \W_1:=\rbrac{W_1^{(1)},W_1^{(2)}}, \text{ and } \W_2:=\rbrac{W_2^{(1)},W_2^{(2)}},
\end{align}
where $W_1^{(1)}\in\mathbb{R}^{m\times d}, W_1^{(2)}\in\mathbb{R}^{r\times m}, W_2^{(1)} \in \mathbb{R}^{m\times r}, W_2^{(2)} \in \mathbb{R}^{ k\times m}$ and $\vx\in\mathbb{R}^d$ is the input. Here the input dimension is $d$, the bottleneck width is $r$ and the output dimension is $k$. In this section, we assume $k=1$. We note that the results can be easily extended to multiple output case, i.e., $k>1$.

Alternatively, we can regard $g(\mc{W};\vx)$ as a cascade of two wide neural networks $f_1$ and $f_2$, where the output of $f_1$ serves as the input of $f_2$. Specifically, for any $\vx\in\mathbb{R}^d, \vz\in\mathbb{R}^r$, we can define
\begin{align*}
    f_1(\rmW_1;\vx) = \frac{1}{\sqrt{m}}W_1^{(2)}\frac{1}{\sqrt{d}}W_1^{(1)}\vx,~~f_2(\rmW_2;\vz) = \frac{1}{\sqrt{m}}W_2^{(2)}\frac{1}{\sqrt{r}}W_2^{(1)}\vz,
\end{align*}
where $\rmW_1 :=\left(W_1^{(1)},W_1^{(2)}\right)$ and $\rmW_2 :=\left(W_2^{(1)},W_2^{(2)}\right)$.

Then it is not hard to see that
\begin{align}
    g(\mc{W};\vx) = f_2\left(\rmW_2;f_1\left(\rmW_1;\vx\right)\right).
\end{align}

Observe that $g(\mc{W};\x)$ is a $4^{\rm th}$ degree polynomial in $\mc{W}$, as it is a $4$ layer network function. However, if the network width is sufficiently large, \ie, $m=\omega(1)$, we show that the degree is approximately only $2$.

To state our result, we define the following derivatives at initialization,
\begin{align}
    \bm{g}_\init:= \frac{\partial g(\Wc,\x)}{\partial\Wc}\bigg|_{\mc{W}_{\init}}\qquad\qquad
    \bm{H}_\init := \frac{\partial^2 g(\Wc,\x)}{\partial\W_1\partial\W_2}\bigg|_{\Wc=\Wc_{\init}}
\end{align}
Our main result shows the transition of the network function to a bilinear form. 
\begin{theorem}[Transition to bilinearity]\label{thm:bilinear}
Given a ball $\mathbb{B}(\mc{W}_{\init},R)$ with $R = O(1)$, with probability at least $1-12e^{-\nicefrac{m}{32}}$, for a fixed input $\vx\in\mathbb{R}^d$, the function $\mc W\mapsto g(\mc{W};\vx)$ can be accurately approximated by a bilinear function $q(\Wc)$, 
\begin{align}
    \MoveEqLeft\left|g(\mc{W};\x) - q(\Wc)\right|= O\left(\frac{1}{\sqrt{m}}\right),\\
    \MoveEqLeft q(\Wc) :=  g(\mc{W}_{\init};\vx) + 
    \bm{g}_\init\tran 
    (\mc{W}-\mc{W}_{\init}) + (\W_1-\bm{W}_{1,\init})\tran\bm{H}_\init(\W_2-\bm{W}_{2,\init})
\end{align}
where $q(\mc W)$ is linear in $\W_1$ and linear in $\W_2$.
\end{theorem}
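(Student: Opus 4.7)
The function $g(\mc{W};\vx)$ is a product of four weight matrices---one factor from each of $W_1^{(1)}, W_1^{(2)}, W_2^{(1)}, W_2^{(2)}$---so it is a homogeneous polynomial of total degree $4$ in the entries of $\mc{W}$. Writing $W_b^{(\ell)} = W_{b,\init}^{(\ell)} + \Delta W_b^{(\ell)}$ and expanding, $g(\mc{W};\vx)$ becomes a sum of $2^4=16$ monomials. I would index them by a pair $(i,j)\in\{0,1,2\}^2$, where $i$ counts how many of the two $\W_1$ factors are perturbations and $j$ does the same for $\W_2$. By inspection of the formula for $q$, the bilinear approximation collects exactly the monomials with $\max(i,j)\le 1$: the constant piece $g(\mc{W}_\init;\vx)$ is the $(0,0)$ term, the gradient piece $\bm{g}_\init\tran(\mc{W}-\mc{W}_\init)$ collects the $(1,0)$ and $(0,1)$ terms, and the cross-Hessian piece $(\W_1-\bm{W}_{1,\init})\tran\bm{H}_\init(\W_2-\bm{W}_{2,\init})$ collects the four $(1,1)$ monomials. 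Hence the residual $g(\mc{W};\vx) - q(\mc{W})$ decomposes into five groups $(2,0),(0,2),(2,1),(1,2),(2,2)$, and the proof reduces to bounding each.

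The key place where the bottleneck structure enters is in the two pure second-order residuals. The $(2,0)$ contribution is
\[
    T_{20} = \frac{1}{m\sqrt{rd}}\, W_{2,\init}^{(2)} W_{2,\init}^{(1)}\, \Delta W_1^{(2)} \Delta W_1^{(1)}\vx.
\]
The perturbation factor $\Delta W_1^{(2)} \Delta W_1^{(1)} \vx$ lies in $\mathbb{R}^r$ and has Euclidean norm at most $R^2 \norm{\vx}$ since operator norms are dominated by Frobenius norms which are bounded by $R$ in the ball. The initialization factor $W_{2,\init}^{(2)} W_{2,\init}^{(1)}$ lives in $\mathbb{R}^{1\times r}$---a space whose width $r=O(1)$ is exactly the bottleneck dimension---so each of its $r$ entries is a sum of $m$ products of independent $\mathcal{N}(0,1)$ variables and by a direct Gaussian computation its Euclidean norm is $O(\sqrt{m})$ with high probability. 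Combining these bounds gives $|T_{20}|=O(R^2\norm{\vx}/\sqrt{m})$. The $(0,2)$ contribution is treated symmetrically via the sandwich vector $W_{1,\init}^{(2)} W_{1,\init}^{(1)}\vx \in \mathbb{R}^r$, which for the same dimensional reason has norm $O(\sqrt{m})$ at initialization.

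For the remaining residuals $(2,1),(1,2),(2,2)$, only generic bounds on individual initial matrices are needed: $\opnorm{W_{b,\init}^{(\ell)}} = O(\sqrt m)$ with high probability by a standard Davidson--Szarek inequality. Each such term contains at most one initialization factor and at least three perturbation factors; after the $1/(m\sqrt{rd})$ prefactor one obtains $|T_{21}|,|T_{12}|=O(R^3\norm{\vx}/\sqrt m)$ and $|T_{22}|=O(R^4\norm{\vx}/m)$, so all are absorbed into the same $O(1/\sqrt{m})$ rate. Summing the five residual groups yields the claimed bound on $|g(\mc{W};\vx)-q(\mc{W})|$. The probabilistic content is confined to six concentration events---four operator-norm bounds on the $W_{b,\init}^{(\ell)}$ and two bounds on the sandwich quantities---each failing with probability at most $2e^{-m/32}$, and a union bound yields the stated $12e^{-m/32}$ failure probability. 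Bilinearity of $q$ is immediate from its formula: each of the three pieces is jointly linear in $\W_1$ and jointly linear in $\W_2$. The main technical obstacle is the sharp control of the sandwich quantities; a cavalier product of individual operator norms would give $\opnorm{W_{2,\init}^{(2)}W_{2,\init}^{(1)}}=O(m)$ instead of the desired $O(\sqrt m)$, and the bilinear approximation would fail to decay with the width.
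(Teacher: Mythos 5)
Your proposal is correct in substance but takes a genuinely different route from the paper. The paper proves the theorem via Taylor's theorem with a Lagrange remainder: it bounds $\opnorm{\partial^3 g/\partial\mc{W}^3}$ uniformly over the ball to kill the remainder $R_3$ at an unknown intermediate point $\xivec$, and separately shows that the full Hessian at initialization is $\tilde{O}(1/\sqrt{m})$-close in operator norm to the block-off-diagonal matrix built from $\H_\init$, so that the $2$-jet collapses to $q$. You instead exploit the fact that $g$ is \emph{exactly} a multilinear polynomial of degree $4$, expand in the perturbations $\Delta W_b^{(\ell)}$ into $16$ monomials, observe that $q$ is exactly the sum of the $(0,0),(1,0),(0,1),(1,1)$ groups, and bound the five residual groups term by term. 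This is cleaner in one respect: you never need uniform-over-the-ball derivative bounds, only bounds at initialization plus $\norm{\Delta W}\le R$. Both arguments hinge on the identical key estimate, which you correctly isolate: the sandwich quantities $W_{2,\init}^{(2)}W_{2,\init}^{(1)}\in\mathbb{R}^{1\times r}$ and $W_{1,\init}^{(2)}W_{1,\init}^{(1)}\vx\in\mathbb{R}^{r}$ concentrate at scale $\sqrt{m}$ rather than the naive $m$, precisely because the bottleneck forces these products into an $O(1)$-dimensional space; this is what makes the $(2,0)$ and $(0,2)$ residuals (equivalently, the paper's within-block Hessian entries) vanish as $m\to\infty$.

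Two caveats. First, your probability accounting for the two sandwich events is slightly optimistic: each entry of, say, $W_{2,\init}^{(2)}W_{2,\init}^{(1)}$ is conditionally Gaussian with standard deviation $\Theta(\sqrt{m})$, so a bound of $C\sqrt{m}$ holds only with constant (not $e^{-\Omega(m)}$) failure probability; to get a decaying failure probability you must pay a $\log m$ factor, yielding $\tilde{O}(1/\sqrt{m})$ overall with failure probability $e^{-\Omega(\log^2 m)}$ for those two events. The paper's own Lemma 3.2(b) does exactly this, and its theorem statement shares the same slight mismatch, so this is not a gap relative to the paper. Second, the paper additionally proves $\opnorm{\H_\init}\ge \norm{\x}/(24\sqrt{rd})$ to certify that the bilinear term is non-degenerate (so $g$ genuinely transitions to a degree-$2$, not degree-$1$, function); your proposal omits this lower bound, which is harmless for the approximation claim as literally stated but is part of what the paper establishes.
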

\begin{proof}[Proof of \Cref{thm:bilinear}]
Our proof is based on an application of the mean value theorem. Given $R = O(1)$, the Taylor expansion with a Lagrange remainder term, assures that there exists a $\xivec \in \mathbb{B}(\mc{W}_{\init},R)$ such that
\begin{align*}
    g(\mc{W};\x) = J_{\mc{W}_\init}^2 \{g(\cdot,\vx)\}\left(\mc{W}-\mc{W}_\init\right) + R_3(\mc{W},\xivec),
\end{align*}
where $J_{\mc{W}_\init}^2 g(\cdot;\vx))$ is the $2$-jet of $g$, defined in \cref{def:jet}, and $R_3(\mc{W},\xivec)$ is the remainder term defined in \cref{eq:def:remainder}. 

To prove the result we first show that the second term above is negligible, \ie, $|R_3(\Wc,\xivec)| = O(m^{-\half})$ , using a Cauchy-Schwarz inequality. 
We then show that the first term is well approximated by the bilinear form in the statement of the result, \ie,
\begin{align}
   \norm{J^2_{\Wc_\init}\{g(\cdot,\vx)\}(\Wc-\Wc_\init) - q(\Wc)} = {O(m^{-\half})}.
\end{align} 

For multiple output case, i.e., $k>1$, we can apply the same analysis to bound $R_3$ for each output component. Therefore, each component of the output, i.e., $g_i$ for $i\in[k]$ will transition to a bilinear function. 

The intermediate results are stated in the following lemma.
\begin{lemma} \label{lem:bilinear_bounds} The following statements hold for randomly initialized $\Wc_\init.$
\begin{enumerate}[label=(\alph*)]
    \item 
\label{lem:bilinear_upper_bound}
With probability at least $1-8e^{-\nicefrac{m}{2}}$, we have 
\begin{align}
    |R_3(\mc{W},\xivec)|
    \leq  \frac{(8\sqrt{m} + 2\sqrt{r}+\sqrt{d}+1 +4R)\|\vx\|}{m\sqrt{rd}}R^3,\qquad{\forall\xivec\in\mathbb{B}(\mc W_{\init},R).}
\end{align}
\item\label{lem:bilinear_lower_bound}
With probability at least $1-4e^{-\nicefrac{m}{32}}$ over random initialization, we have 
\begin{align}
    \norm{{\frac{\partial^2 g(\mc W;\x)}{\partial \mc W^2}\bigg|_{\Wc_\init}} - \begin{bmatrix}\zero &\H_\init \\
    \H_\init\tran & \zero
    \end{bmatrix}}_{\rm op} \leq\frac{2(\sqrt{6}+\sqrt{d})\|\vx\|\log m}{\sqrt{md}}= \tilde{O}\left(\frac{1}{\sqrt{m}}\right),
\end{align}
where
\begin{align}
    \opnorm{\H_\init}  \geq \frac{\norm{\x}}{24\sqrt{rd}}.
\end{align}
\end{enumerate}
\end{lemma}
{Note that if $\|\x\| = 0$, $g(\mc{W};\vx) = 0$ hence we can find $q(\mc{W}) = 0$ which is a bilinear function. Otherwise, $\opnorm{\H_\init} = \Omega(1)$ with high probability, which justifies the quadraticity of $q(\mc{W})$.}

Together this yields the result using triangle inequality. This concludes the proof of \Cref{thm:bilinear}.
\end{proof}

Before proceeding to the proofs of the intermediate lemma used in proving \cref{thm:bilinear}, we pause to visualize the bilinear model. We consider 15 different perturbations from the  initial weights. The first 5 perturbations apply to all weights of the network, as shown in \Cref{fig:bilinear}. The next 5 perturbations are applied to only the first block of the network, \ie, before the bottleneck (\cref{fig:bilinear_1}), and the last 5 are applied to the weights after the bottleneck (\cref{fig:bilinear_2}).

Observe that while the network function is quadratic in the weights, it is in fact linear in $\W_1$ and $\W_2$ the two blocks of weights separated by the bottleneck.

\begin{remark}
We note that the class of bilinear functions is a proper subset of the class of quadratic functions, which is important for the complexity of the class of bottleneck neural networks. 
\end{remark}
\begin{remark}
Optimization of bilinear models can be much easier than that of general quadratic models~\cite{netrapalli2013phase}.
\end{remark}

\subsection{Proof of \texstr{\Cref{lem:bilinear_bounds}}}

It is instructive to note the derivatives of the network function with respect to the weight matrices.
\begin{align*}
    \nabla g(\mc W;\x) &= \left[\frac{\partial g}{\partial W_{1}^{(1)}},\frac{\partial g}{\partial W_{1}^{(2)}},\frac{\partial g}{\partial W_{2}^{(1)}},\frac{\partial g}{\partial W_{2}^{(2)}}\right]\tran  
    =\frac{1}{m} \begin{bmatrix}{\mathrm{vec}((W_{2}^{(2)}}  W_{2}^{(1)}{W_{1}^{(2)}} )\otimes  \vx)\\
    \mathrm{vec}(({W_{2}^{(2)}}  W_{2}^{(1)}) \otimes (W_{1}^{(1)}\vx))\\
    \mathrm{vec}(  {W_{2}^{(2)}}\otimes ( W_{1}^{(2)} {W_{1}^{(1)}} \vx))\\
 \mathrm{vec}(\mathbf{1} \otimes  W_{2}^{(1)} W_{1}^{(2)} {W_{1}^{(1)}} \vx)\end{bmatrix}\in\mathbb{R}^{m(d+2r+1)}.%
\end{align*}

\begin{proof}[Proof of {\Cref{lem:bilinear_bounds}\ref{lem:bilinear_upper_bound}}]
By Lemma~\ref{lemma:tensor_norm}, the norm of $\frac{\partial^3}{\partial \Wc^3} g(\mc{W})$ can be bounded by the summation of the norm of blocks with respect to the weights in each layer. Specifically, 
\begin{align*}
    \opnorm{\frac{\partial^3}{\partial \Wc^3} g(\mc{W})} &\leq 6\opnorm{\frac{\partial^3 g}{\partial W_1^{(1)}\partial W_1^{(2)}\partial W_2^{(1)}}} + 6\opnorm{\frac{\partial^3 g}{\partial W_1^{(1)}\partial W_1^{(2)}\partial W_2^{(2)}}} + 6\opnorm{\frac{\partial^3 g}{\partial W_1^{(1)}\partial W_2^{(1)}\partial W_2^{(2)}}}\\
    &~~~~+ 6\opnorm{\frac{\partial^3 g}{\partial W_1^{(2)}\partial W_2^{(1)}\partial W_2^{(2)}}}.
\end{align*}

Take the first quantity on the RHS of the above inequality as an example:
\begin{align*}
    \opnorm{\frac{\partial^3 g}{\partial W_{1}^{(1)}\partial W_{1}^{(2)}\partial W_{2}^{(1)}}} &= \sup_{\norm{V_1^{(1)}}=\norm{V_1^{(2)}}=\norm{V_2^{(1)}} = 1}\norm{ \frac{1}{\sqrt{m}}W_2^{(2)}\frac{1}{\sqrt{r}} V_2^{(1)} \frac{1}{\sqrt{m}}V_1^{(2)}\frac{1}{\sqrt{d}} V_1^{(1)}\vx}\\
    &\leq \frac{\|\vx\|}{m\sqrt{rd}} \opnorm{W_2^{(2)}}
    \leq \frac{\|\vx\|}{m\sqrt{rd}} \left(\opnorm{W_{2,\init}^{(2)}}+R\right).
\end{align*}

Note that $W_{2,\init}^{(2)}$ is a random matrix with i.i.d. standard normal entries. Therefore we use a concentration inequality, detailed in  Lemma~\ref{lemma:random_matrix}, to bound its norm. For the choice $t=\sqrt{m}$, we have that, $\opnorm{W_{2,\init}^{(2)}}\leq 2\sqrt{m}+1$, with probability at least $1-2e^{-m/2}$. Thus,
\begin{align*}
    \opnorm{\frac{\partial^3 g}{\partial W_{1}^{(1)}\partial W_{1}^{(2)}\partial W_{2}^{(1)}}} \leq \frac{(2\sqrt{m}+1+R)\|\vx\|}{m\sqrt{rd}},\qquad \text{w.p.}\geq 1-2e^{-\nicefrac{m}{2}}.
\end{align*}

We can similarly bound the rest of the blocks, and apply union bound over the randomness of the four layers. This yields that with probability at least $1-8e^{-\nicefrac{m}{2}}$, we have,
\begin{align*}
    \opnorm{\frac{\partial^3}{\partial \Wc^3} g(\mc{W})} &\leq 12 \frac{(2\sqrt{m}+\sqrt{r}+R)\|\vx\|}{m\sqrt{rd}} + 6\frac{(2\sqrt{m}+\sqrt{d}+R)\|\vx\|}{m\sqrt{rd}} + 6\frac{(2\sqrt{m}+1+R)\|\vx\|}{m\sqrt{rd}}\\
    &= \frac{6(8\sqrt{m} + 2\sqrt{r}+\sqrt{d}+1 +4R)\|\vx\|}{m\sqrt{rd}}
\end{align*}

Next, for any $\xivec \in \mathbb{B}(\mc{W}_\init,R)$ with $R = O(1)$, a Cauchy–Schwarz inequality yields,
\begin{align*}
    |R_3(\mc{W},\xivec)|&\leq \max_{\xivec \in \mathbb{B}(\mc{W}_{\init},R)}  \frac{1}{6} \inner{\eval{\frac{\partial^3}{\partial \Wc^3} g(\mc{W})}_{\mc{W}=\xivec}, (\mc{W}-\mc{W}_{\init})^{\otimes 3}} \\
    &\leq \frac{1}{6}\max_{\xivec \in \mathbb{B}(\mc{W}_{\init},R)} \opnorm{ \eval{\frac{\partial^3}{\partial \Wc^3} g(\mc{W})}_{\mc{W}=\xivec}} \norm{\mc{W}-\mc{W}_{\init}}^3\\
    &\leq \frac{(8\sqrt{m} + 2\sqrt{r}+\sqrt{d}+1 +4R)\|\vx\|}{m\sqrt{rd}}R^3 = O\left(\frac{1}{\sqrt{m}}\right),\qquad\text{w.p.}\geq 1-8e^{-\nicefrac{m}{2}}.
\end{align*}

This concludes the proof of {\Cref{lem:bilinear_bounds}\ref{lem:bilinear_upper_bound}}.
\end{proof}

\begin{proof}[Proof of \Cref{lem:bilinear_bounds}\ref{lem:bilinear_lower_bound}]

We will show  $\eval{\frac{\partial^2}{\partial \Wc^2} g(\mc{W})}_{\Wc_\init}$ is close to  $\begin{bmatrix}\zero &\H_\init \\
    \H_\init\tran & \zero
    \end{bmatrix}$. Notice that the off-diagonal blocks are close by definition.
    
For the upper left block, i.e.,$\begin{bmatrix}\frac{\partial^2 g}{\left(\partial W_1^{(1)}\right)^2}  &\frac{\partial^2 g}{\partial W_1^{(1)} \partial W_1^{(2)}}  \\
   \frac{\partial^2 g}{\partial W_1^{(2)} \partial W_1^{(1)}}  & \frac{\partial^2 g}{\left(\partial W_1^{(2)}\right)^2} 
    \end{bmatrix}$, it is not hard to see that
\begin{align*}
   \frac{\partial^2 g_i}{\left(\partial W_1^{(1)}\right)^2} =\zero,\quad \frac{\partial^2 g_i}{\left(\partial W_1^{(2)}\right)^2} = \zero.
\end{align*}
Further, by definition, we have,
\begin{align}\nonumber
\opnorm{\eval{\frac{\partial^2 g}{\partial W_1^{(1)} \partial W_1^{(2)}}}_{\mc{W} = \mc{W}_{\init}}}&= \opnorm{\eval{\frac{\partial^2 g}{\partial W_1^{(2)} \partial W_1^{(1)}}}_{\mc{W} = \mc{W}_{\init}}} \\\nonumber
&=\sup_{\norm{V_1^{(1)}}=\norm{V_1^{(2)}}=1}\opnorm{ \frac{1}{\sqrt{m}}W_{2,\init}^{(2)}\frac{1}{\sqrt{r}} W_{2,\init}^{(1)} \frac{1}{\sqrt{m}}V_1^{(2)}\frac{1}{\sqrt{d}} V_1^{(1)}\vx}\\\label{eq:hessian_layer1}
&\leq \frac{1}{m\sqrt{rd}}\opnorm{W_{2,\init}^{(2)} W_{2,\init}^{(1)}}\|\vx\|,
  \end{align}  
using Cauchy-Schwarz inequality.

Note that 
    $W_{2,\init}^{(2)} \left(W_{2,\init}^{(1)}\right)_{[:,j]}$
 is a Gaussian random variable with variance $\norm{\left(W_{2,\init}^{(1)}\right)_{[:,j]}}^2$ conditioned on $W_{2,\init}^{(1)}$.
Note that $\norm{\left(W_{2,\init}^{(1)}\right)_{[:,j]}}^2 \sim\chi^2(m)$. We apply Lemma~\ref{lemma:chi_2} and choose $t=\frac{1}{2}$ to get, 
\begin{align}
\frac{m}{2}\leq \norm{\left(W_{2,\init}^{(1)}\right)_{[:,j]}}^2\leq \frac{3m}{2},\qquad
\text{ w.p.}\geq 1-2e^{\nicefrac{-m}{32}}   .
\end{align}

We can bound $W_{2,\init}^{(2)} \left(W_{2,\init}^{(1)}\right)_{[:,j]}$, by setting $t = \frac{\sqrt{6m}}{2}\log m$ in Lemma~\ref{lemma:gaussian}, which gives
\begin{align}
   \left| W_{2,\init}^{(2)} \left(W_2^{(1)}\right)_{[:,j]}\right| \leq \frac{\sqrt{6m}}{2}\log m,\qquad \text{w.p.}\geq 1-2e^{-\nicefrac{\log^2 m}{2}}.
\end{align}

Applying union bound over the indices $j\in[r]$, we have, 
\begin{align}
    \opnorm{W_{2,\init}^{(2)} W_{2,\init}^{(1)}} \leq \sqrt{r}\frac{\sqrt{6m}\log m}{2},\qquad\wp\geq 1-2re^{-\nicefrac{m}{32}} - 2e^{-\nicefrac{\log^2 m}{2}}.
\end{align}
The above bound along with equations \eqref{eq:hessian_layer1} gives the following high-probability bound, 
\begin{align*}
    \opnorm{\eval{\frac{\partial^2 g}{\partial W_1^{(1)} \partial W_1^{(2)}}}_{\mc{W} = \mc{W}_{\init}}} \leq \frac{ \sqrt{6}\|\vx\|\log m}{2\sqrt{md}} = \tilde{O}\left(\frac{1}{\sqrt{m}}\right),\quad\wp\geq 1-2re^{-\nicefrac{m}{32}} - 2e^{-\nicefrac{\log^2 m}{2}}.
\end{align*}

Therefore, the upper left block of $\displaystyle{\frac{\partial^2 g(\mc W;\x)}{\partial \mc W^2}\bigg|_{\Wc_\init}}$ will be close to $\zero$ with high probability, for large enough $m$.

Applying a similar analysis, we can  bound the lower right block, i.e., $\begin{bmatrix}\frac{\partial^2 g}{\left(\partial W_2^{(1)}\right)^2}  &\frac{\partial^2 g}{\partial W_2^{(1)} \partial W_2^{(2)}}  \\
   \frac{\partial^2 g}{\partial W_2^{(2)} \partial W_2^{(1)}}  & \frac{\partial^2 g}{\left(\partial W_2^{(2)}\right)^2} 
    \end{bmatrix}$ as well.  Specifically, we get that,
\begin{align*}
     \opnorm{\eval{\frac{\partial^2 g}{\partial W_2^{(1)} \partial W_2^{(2)}}}_{\mc{W} = \mc{W}_{\init}}} 
     \leq \frac{ \sqrt{6}\|\vx\|\log m}{2\sqrt{m}} = \tilde{O}\left(\frac{1}{\sqrt{m}}\right),\quad\wp\geq 1-2d e^{-\nicefrac{m}{32}} - 2re^{-\nicefrac{\log^2m}{2}}.
\end{align*}

Combining the results together,  with probability at least $1-4(d+r)e^{-\nicefrac{m}{32}} -4(r+e)2^{-\nicefrac{\log^2 m}{2}}$,
\begin{align*}
    \norm{\eval{\frac{\partial^2}{\partial \Wc^2} g(\mc{W})}_{\Wc_\init} - \begin{bmatrix}\zero &\H_\init \\
    \H_\init\tran & \zero
    \end{bmatrix}}_{\rm op} \leq \frac{2(\sqrt{6}+\sqrt{d})\|\vx\|\log m}{\sqrt{md}}= \tilde{O}\left(\frac{1}{\sqrt{m}}\right).
\end{align*}

Finally, we show $\opnorm{\H_{\init}}$ is separated from $0$.
Consider the unit vector with the same shape of flattened $\mc{W}$:
\begin{align*}
    \rvv = \frac{1}{\sqrt{\norm{W^{(1)}_{1,\init}\rve_1}^2+ \norm{W^{(1)}_{2,\init}\rve_2}^2}}\sbrac{\mathbf{0}\tran ,\underbrace{\rve_1^T{W^{(1)}_{1,\init}}\tran}_{\mathrm{w.r.t.}~\left(W_{1}^{(2)} \right)_{[1,:]}~\mathrm{in}~\mc{W}} ,\mathbf{0}\tran ,\underbrace{\rve_2^T{W^{(1)}_{2,\init}}\tran}_{\mathrm{w.r.t.}~W_{2}^{(2)} ~\mathrm{in}~\mc{W}} }\tran, 
\end{align*}
where $\rve_1 = (1,0,0,\cdots,0)^T \in\mathbb{R}^d$ and $\rve_2 = (1,0,0,\cdots,0)^T \in\mathbb{R}^r$.

Then we have
\begin{align*}
     \rvv\tran  \H_\init\rvv &= \frac{1}{m\sqrt{rd}}\frac{\rve_1^T {W_{1,\init}^{(1)}}\tran W_{1,\init}^{(1)}\vx \rve_2^T{W^{(1)}_{2,\init}}\tran {W^{(1)}_{2,\init}}\rve_2}{\norm{W^{(1)}_{1,\init}\rve_1}^2+ \norm{W^{(1)}_{2,\init}\rve_2}^2} = \frac{1}{m\sqrt{rd}}\frac{\|\vx\|\norm{ W_{1,\init}^{(1)}\rve_1}^2  \norm{{W^{(1)}_{2,\init}}\rve_2}^2}{\norm{W^{(1)}_{1,\init}\rve_1}^2+ \norm{W^{(1)}_{2,\init}\rve_2}^2}.
\end{align*}
Since $\norm{W^{(1)}_{1,\init}\rve_1}^2 \sim\chi^2(m)$ and $\norm{W^{(1)}_{2,\init}\rve_2}^2 \sim\chi^2(m)$, we can apply \Cref{lemma:chi_2} and use $t = 1/2$ to bound their norms. Specifically, we get with probability at least $1-4e^{-\nicefrac{m}{32}}$,
\begin{align*}
    \frac{m}{2}\leq \norm{W^{(1)}_{1,\init}\rve_1}^2\leq  \frac{3m}{2},~~~\frac{m}{2}\leq \norm{W^{(1)}_{2,\init}\rve_2}^2\leq  \frac{3m}{2}.
\end{align*}

Consequently, we have that,
\begin{align*}
\opnorm{\H_\init} \geq \left|\rvv\tran  \H_\init\rvv \right|\geq \frac{\norm{\x}}{24\sqrt{rd}}. \qquad\wp\geq1-4e^{-\nicefrac{m}{32}}.
\end{align*}
This concludes the proof of  \Cref{lem:bilinear_bounds}\ref{lem:bilinear_lower_bound}.
\end{proof}

\section{Neural networks with multiple bottlenecks}
\label{sec:multi}
In this section, we extend our results to deep linear networks with multiple bottleneck layers. We show that linear networks with $B-1$ bottlenecks will transition to $B$-degree polynomials of the weights, or more accurately, $B$-linear functions. Importantly, this is independent of the total depth of the network.

\begin{theorem}[Transition to $B$-th degree polynomials]\label{thm:deep_bnn}
Given a ball $\mathbb{B}(\mc{W}_{\init},R)$ with $R = O(1)$, with probability at least $1-e^{-\Omega(m)}$, $g(\mc{W})$ can be accurately approximated by a $B$-th degree polynomial. Specifically,
\begin{align}
    \norm{g(\mc{W}) - \left(J_{\mc{W}_{\init}}^B g(\cdot;\vx)\right)(\mc{W}-\mc{W}_{\init})}= O\left(\frac{1}{\sqrt{m}}\right),
\end{align}
where
\begin{align*}
   \left(J_{\mc{W}_{\init}}^B g(\cdot;\vx)\right)(\mc{W}-\mc{W}_{\init}) =  g(\mc{W}_\init;\vx) + \sum_{k=1}^B\frac{1}{k!} \inner{\eval{\frac{dg^k(\mc{W};\vx)}{d{\mc{W}}^k}}_{\mc{W}=\mc{W}_\init}, (\mc{W}-\mc{W}_\init)^{\otimes k}}.
\end{align*}
Furthermore, this polynomial function is multilinear in $\mc{W}$ with degree $B$. Consequently, as $m\rightarrow \infty$, $g(\mc{W};\vx)$ transitions to a $B$-th degree multilinear function of $\mc{W}$.
\end{theorem}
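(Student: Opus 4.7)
The plan is to mirror the proof of \Cref{thm:bilinear}, now at order $B$. First, I would apply the Mean Value form of Taylor's theorem (\Cref{def:taylor}) at $\mc{W}_\init$ to obtain a $\xivec \in \mathbb{B}(\mc{W}_\init, R)$ such that
\begin{align*}
    g(\mc{W};\vx) = \left(J_{\mc{W}_\init}^B g(\cdot;\vx)\right)(\mc{W} - \mc{W}_\init) + R_{B+1}(\mc{W}, \xivec).
\end{align*}
The proof then splits into (i) bounding $|R_{B+1}(\mc{W}, \xivec)| = O(m^{-1/2})$ uniformly on the ball, and (ii) showing that the $B$-jet is multilinear in $(\W_1, \ldots, \W_B)$ up to an $O(m^{-1/2})$ correction.

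For (i), the key structural observation is that $g = f_B \circ \cdots \circ f_1$ and the parameters of $f_b$ lie entirely in the block $\W_b$. I would expand $\partial^{B+1} g/\partial \mc{W}^{B+1}$ via the chain and product rules into contributions indexed by partition patterns $(k_1, \ldots, k_B)$ with $\sum_b k_b = B+1$, where $k_b$ counts how many derivatives fall in block $b$. By pigeonhole, every pattern has some $k_b \geq 2$. For such a block, replacing $k_b$ of the $L_b$ matrices of $f_b$ by unit-Frobenius direction matrices (each carrying its normalization $1/\sqrt{m_\ell}$) costs at least $k_b - 1$ extra factors of $1/\sqrt m$ beyond what $k_b = 1$ would cost, since only one layer in each block (the first) has a non-$m$ normalization. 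The un-perturbed normalized Gaussian matrices in every block contribute only $O(1)$ in operator norm by \Cref{lemma:random_matrix}. A union bound over the finitely many patterns then yields $\opnorm{\partial^{B+1} g/\partial \mc{W}^{B+1}} = O(m^{-1/2})$ uniformly on $\mathbb{B}(\mc{W}_\init, R)$ with failure probability $e^{-\Omega(m)}$, via a \Cref{lemma:tensor_norm}-style recombination. A Cauchy--Schwarz bound on the Lagrange remainder, together with $\|\mc{W}-\mc{W}_\init\|^{B+1} = O(1)$, then gives $|R_{B+1}| = O(m^{-1/2})$.

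For (ii), I would apply the same partition-pattern decomposition to each derivative $\partial^k g/\partial \mc W^k$ for $k = 1, \ldots, B$ appearing in the $B$-jet. Any pattern with some $k_b \geq 2$ is again $O(m^{-1/2})$ in operator norm by the same normalization counting, and is absorbed into the error term. The surviving leading-order patterns are those with $k_b \in \{0,1\}$ for every $b$; these are indexed by subsets $S \subseteq [B]$ of size $k$, contributing one derivative per block of $S$ and zero elsewhere. Collecting all such contributions across $k = 0, 1, \ldots, B$ assembles, up to $O(m^{-1/2})$, into the canonical multilinear expansion
\begin{align*}
   \sum_{S \subseteq [B]} \left(\prod_{b \in S} \Lambda_b(\W_b - \W_{b,\init})\right) c_{[B] \setminus S},
\end{align*}
where each $\Lambda_b$ is a linear functional in $\W_b$ (built from the first derivative of $f_b$ combined with the Jacobians of the other blocks at initialization) and $c_{[B] \setminus S}$ is a constant depending only on the initialization of blocks outside $S$. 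This expression is $B$-linear in $(\W_1, \ldots, \W_B)$, establishing the multilinearity claim.

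The main obstacle is the combinatorial and tensor-algebraic bookkeeping: one must obtain uniform-over-the-ball operator-norm bounds on every product of perturbed Gaussian matrices arising from every partition pattern, and then verify that the one-derivative-per-block contributions really do assemble into the canonical $B$-linear expansion rather than into spurious cross terms that only appear multilinear. A secondary step, analogous to \Cref{lem:bilinear_bounds}\ref{lem:bilinear_lower_bound}, would be to exhibit a rank-one direction $(\rvv_1, \ldots, \rvv_B)$ on which the leading $B$-linear form is $\Omega(1)$, so that the degree really equals $B$ and not smaller; the natural candidate is a tensor product of one-hot-like directions built from the first-layer Gaussian matrices of each block, generalizing the bilinear construction.
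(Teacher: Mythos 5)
Your plan is essentially the paper's own proof: the same Taylor/Lagrange-remainder split, the same partition-pattern decomposition of $\partial^{B+1}g/\partial\mc{W}^{B+1}$ with the pigeonhole observation that some block receives two derivatives and each excess derivative per block costs a factor $1/\sqrt{m}$ (the paper packages this as \Cref{lemma:upper_bound} via \Cref{lemma:outout_wnn} and \Cref{lemma:p_derivative}), and the same style of lower-bound witness built from the forward-propagated Gaussian vectors of each block (\Cref{lemma:lower_bound}). Your part (ii), which explicitly kills the within-block (some $k_b\geq 2$) contributions to the lower-order jet terms to exhibit the canonical $B$-linear form, is if anything more careful than the paper's one-line multilinearity argument, and is the correct way to justify that claim.
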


\begin{proof}[Proof of Theorem~\ref{thm:deep_bnn}] 
The proof follows a similar idea with the proof of Theorem~\ref{thm:bilinear}. Without lose of generality, we assume the output dimension of $g(\mc{W};\vx)$ is $1$, i.e., $d_B=1$. The result can be easily extend to multiple output of $g$ by analyzing each component of the output.

Given $R = O(1)$, by Taylor expansion with Lagrange remainder term, there exists a $\xivec \in \mathbb{B}(\mc{W}_{\init},R)$ such that
\begin{align*}
    g(\mc{W};\vx) - (J_{\mc{W}_\init}^B g(\cdot;\vx))\left(\mc{W}-\mc{W}_\init\right) = R_{B+1}(\mc{W},\xivec),
\end{align*}
 where
\begin{align*}
    R_{B+1}(\mc{W},\xivec) =  \frac{1}{(B+1)!} \inner{\eval{\frac{dg^{B+1}(\mc{W};x)}{d{\mc{W}}^{B+1}}}_{\mc{W}=\xivec}, (\mc{W}-\mc{W}_{\init})^{\otimes B+1}}.
\end{align*}
We use the following lemma which upper bounds the $(B+1)$-th derivative of $g$:
\begin{lemma}\label{lemma:upper_bound}
 With probability at least $1-2\sum_{b=1}^{B}(L_b-1)e^{-m/2} $ over random initialization $\mc{W}_0$, a BNN $g(\mathcal{W};\vx)$ with $B-1$ bottleneck layers  satisfies
\begin{align*}
     \opnorm{\frac{d^{B+1} g(\mathcal{W};\vx)}{d\mathcal{W}^{B+1}}} \leq \frac{(3+R/\sqrt{m})^{B}}{\sqrt{\prod_{b=1}^{B-1}d_b}}\frac{\|\vx\|}{\sqrt{m}}.
\end{align*}
for $\mc{W} \in \mathbb{B}(\mathcal{W}_{\init},R)$.
\end{lemma}

We apply Lemma~\ref{lemma:upper_bound}, and use Cauchy–Schwarz inequality:
\begin{align*}
    |R_{B+1}(\mc{W},\xivec)|&\leq \max_{\xivec \in \mathbb{B}(\mc{W}_{\init},R)}  \frac{1}{(B+1)!} \inner{\eval{\frac{dg^{B+1}(\mc{W};x)}{d{\mc{W}}^{B+1}}}_{\mc{W}=\xivec}, (\mc{W}-\mc{W}_{\init})^{\otimes B+1}} \\
    &\leq \frac{1}{(B+1)!}\max_{\xi \in \mathbb{B}(\mc{W}_{\init},R)} \opnorm{ \eval{\frac{dg^{B+1}(\mc{W};x)}{d{\mc{W}}^{B+1}}}_{\mc{W}=\xivec}} \norm{\mc{W}-\mc{W}_{\init}}^{B+1}\\
    &\leq \frac{(3+R/\sqrt{m})^{B}}{\sqrt{\prod_{b=1}^{B-1}d_b}}\frac{\|\vx\|}{\sqrt{m}}R^{B+1} = O\left(\frac{1}{\sqrt{m}}\right),
\end{align*}
with probability at least $1-2\sum_{b=1}^{B}(L_b-1)e^{-m/2} $.

We use the following lemma to lower bound the $B$-th derivative of $g$:
\begin{lemma}\label{lemma:lower_bound}
With probability at least $1-2\sum_{b=1}^B L_b e^{-m/32}$ over random initialization of $\mc{W}_\init$, a BNN $g(\mathcal{W};\vx)$ with $B-1$ bottleneck layers  satisfies
\begin{align*}
     \opnorm{\eval{\frac{d^{B} g(\mathcal{W};\vx)}{d\mathcal{W}^{B}}}_{\mc{W} =\mc{W}_{\init}}} \geq 
   2\|\vx\|\prod_{b=1}^B\frac{1}{2^{ L_b/2}\sqrt{ d_{b-1}}}.
\end{align*}
\end{lemma}

Therefore, by Lemma~\ref{lemma:lower_bound}, the degree of $(J_{\mc{W}_\init}^{B} g(\cdot;\vx))\left(\mc{W}-\mc{W}_\init\right)$ is guaranteed to be $B$. Note that if $\|\vx\|=0$, then $g$ is a zero function which is included in the class of $B$-th degree polynomial.
Picking $R = O(1)$, we finish proving the approximation result.

Since $g(\mc{W};x)$ is linear in $W_{b}^{(\ell)}$ for $b\in[B], \ell\in[L_b]$, according to Definition~\ref{def:multilinear}, $g(\mc{W};x)$ is a multilinear function. And the degree of multilinear will be $B$ when $m\rightarrow \infty$ as $g(\mc{W};x)$ will transition to a $B$-th degree polynomial. 
\end{proof}

\section{Conclusion and discussion}
In this work, we show a linear bottleneck neural network (BNN) with $B-1$ bottleneck layers transitions to a $B^{\rm th}$ degree polynomial of parameters when the width of non-bottleneck layers approaches to infinity. This expands the results about the transition to linearity of wide neural networks (WNN).

We also note that a similar result will hold if the first WNN block $f_1$, in a BNN has non-linear activations. 

\paragraph{Acknowledgements:} We are grateful for support of the NSF and the Simons Foundation for the Collaboration on the Theoretical Foundations of Deep Learning\footnote{\url{https://deepfoundations.ai/}} through awards DMS-2031883 and \#814639. We also acknowledge NSF support through  IIS-1815697 and the TILOS institute (NSF CCF-2112665).

\printbibliography

@article{vershynin2010introduction,
  title={Introduction to the non-asymptotic analysis of random matrices},
  author={Vershynin, Roman},
  journal={arXiv preprint arXiv:1011.3027},
  year={2010}
}

@inproceedings{jacot2018neural,
  title={Neural tangent kernel: Convergence and generalization in neural networks},
  author={Jacot, Arthur and Gabriel, Franck and Hongler, Cl{\'e}ment},
  booktitle={Advances in neural information processing systems},
  pages={8571--8580},
  year={2018}
}

@inproceedings{du2018gradientdeep,
  title={Gradient Descent Finds Global Minima of Deep Neural Networks},
  author={Du, Simon and Lee, Jason and Li, Haochuan and Wang, Liwei and Zhai, Xiyu},
  booktitle={International Conference on Machine Learning},
  pages={1675--1685},
  year={2019}
}

@inproceedings{arora2019fine,
  title={Fine-Grained Analysis of Optimization and Generalization for Overparameterized Two-Layer Neural Networks},
  author={Arora, Sanjeev and Du, Simon and Hu, Wei and Li, Zhiyuan and Wang, Ruosong},
  booktitle={International Conference on Machine Learning},
  pages={322--332},
  year={2019}
}

@inproceedings{allen2019convergence,
  title={A Convergence Theory for Deep Learning via Over-Parameterization},
  author={Allen-Zhu, Zeyuan and Li, Yuanzhi and Song, Zhao},
  booktitle={International Conference on Machine Learning},
  pages={242--252},
  year={2019}
}

@inproceedings{zou2019improved,
  title={An improved analysis of training over-parameterized deep neural networks},
  author={Zou, Difan and Gu, Quanquan},
  booktitle={Advances in Neural Information Processing Systems},
  pages={2053--2062},
  year={2019}
}

@article{liu2020linearity,
  title={On the linearity of large non-linear models: when and why the tangent kernel is constant},
  author={Liu, Chaoyue and Zhu, Libin and Belkin, Mikhail},
  journal={Advances in Neural Information Processing Systems},
  volume={33},
  year={2020}
}

@inproceedings{he2016deep,
  title={Deep residual learning for image recognition},
  author={He, Kaiming and Zhang, Xiangyu and Ren, Shaoqing and Sun, Jian},
  booktitle={Proceedings of the IEEE conference on computer vision and pattern recognition},
  pages={770--778},
  year={2016}
}

@article{liu2020loss,
  title={Loss landscapes and optimization in over-parameterized non-linear systems and neural networks},
  author={Liu, Chaoyue and Zhu, Libin and Belkin, Mikhail},
  journal={Applied and Computational Harmonic Analysis},
  year={2022},
  publisher={Elsevier}
}

@article{chizat2019lazy,
  title={On lazy training in differentiable programming},
  author={Chizat, Lenaic and Oyallon, Edouard and Bach, Francis},
  journal={Advances in Neural Information Processing Systems},
  volume={32},
  year={2019}
}

@inproceedings{yu2011improved,
  title={Improved bottleneck features using pretrained deep neural networks},
  author={Yu, Dong and Seltzer, Michael L},
  booktitle={Twelfth annual conference of the international speech communication association},
  year={2011}
}

@inproceedings{matejka2014neural,
  title={Neural Network Bottleneck Features for Language Identification.},
  author={Matejka, Pavel and Zhang, Le and Ng, Tim and Glembek, Ondrej and Ma, Jeff Z and Zhang, Bing and Mallidi, Sri Harish},
  booktitle={Odyssey},
  year={2014}
}

@article{lecun2015deep,
  title={Deep learning},
  author={LeCun, Yann and Bengio, Yoshua and Hinton, Geoffrey},
  journal={nature},
  volume={521},
  number={7553},
  pages={436--444},
  year={2015},
  publisher={Nature Publishing Group}
}

@article{cao2019generalization,
  title={Generalization bounds of stochastic gradient descent for wide and deep neural networks},
  author={Cao, Yuan and Gu, Quanquan},
  journal={Advances in neural information processing systems},
  volume={32},
  year={2019}
}

@article{moroshko2020implicit,
  title={Implicit bias in deep linear classification: Initialization scale vs training accuracy},
  author={Moroshko, Edward and Woodworth, Blake E and Gunasekar, Suriya and Lee, Jason D and Srebro, Nati and Soudry, Daniel},
  journal={Advances in neural information processing systems},
  volume={33},
  pages={22182--22193},
  year={2020}
}

@article{radhakrishnan2020alignment,
  title={On Alignment in Deep Linear Neural Networks},
  author={Radhakrishnan, Adityanarayanan and Nichani, Eshaan and Bernstein, Daniel and Uhler, Caroline},
  journal={arXiv preprint arXiv:2003.06340},
  year={2020}
}

@inproceedings{ji2018gradient,
  title={Gradient descent aligns the layers of deep linear networks},
  author={Ji, Ziwei and Telgarsky, Matus},
  booktitle={International Conference on Learning Representations},
  year={2018}
}

@inproceedings{arora2018convergence,
  title={A Convergence Analysis of Gradient Descent for Deep Linear Neural Networks},
  author={Arora, Sanjeev and Cohen, Nadav and Golowich, Noah and Hu, Wei},
  booktitle={International Conference on Learning Representations},
  year={2018}
}

@inproceedings{bartlett2018gradient,
  title={Gradient descent with identity initialization efficiently learns positive definite linear transformations by deep residual networks},
  author={Bartlett, Peter and Helmbold, Dave and Long, Philip},
  booktitle={International conference on machine learning},
  pages={521--530},
  year={2018},
  organization={PMLR}
}

@inproceedings{du2019width,
  title={Width provably matters in optimization for deep linear neural networks},
  author={Du, Simon and Hu, Wei},
  booktitle={International Conference on Machine Learning},
  pages={1655--1664},
  year={2019},
  organization={PMLR}
}

@article{arora2019implicit,
  title={Implicit regularization in deep matrix factorization},
  author={Arora, Sanjeev and Cohen, Nadav and Hu, Wei and Luo, Yuping},
  journal={Advances in Neural Information Processing Systems},
  volume={32},
  year={2019}
}

@inproceedings{tarmoun2021understanding,
  title={Understanding the dynamics of gradient flow in overparameterized linear models},
  author={Tarmoun, Salma and Franca, Guilherme and Haeffele, Benjamin D and Vidal, Rene},
  booktitle={International Conference on Machine Learning},
  pages={10153--10161},
  year={2021},
  organization={PMLR}
}

@article{netrapalli2013phase,
  title={Phase retrieval using alternating minimization},
  author={Netrapalli, Praneeth and Jain, Prateek and Sanghavi, Sujay},
  journal={Advances in Neural Information Processing Systems},
  volume={26},
  year={2013}
}

@article{rigollet2015high,
  title={High dimensional statistics},
  author={Rigollet, Phillippe and H{\"u}tter, Jan-Christian},
  journal={Lecture notes for course 18S997},
  volume={813},
  number={814},
  pages={46},
  year={2015}
}

@article{gidel2019implicit,
  title={Implicit regularization of discrete gradient dynamics in linear neural networks},
  author={Gidel, Gauthier and Bach, Francis and Lacoste-Julien, Simon},
  journal={Advances in Neural Information Processing Systems},
  volume={32},
  year={2019}
}

@article{gunasekar2018implicit,
  title={Implicit bias of gradient descent on linear convolutional networks},
  author={Gunasekar, Suriya and Lee, Jason D and Soudry, Daniel and Srebro, Nati},
  journal={Advances in Neural Information Processing Systems},
  volume={31},
  year={2018}
}

@inproceedings{jain2013low,
  title={Low-rank matrix completion using alternating minimization},
  author={Jain, Prateek and Netrapalli, Praneeth and Sanghavi, Sujay},
  booktitle={Proceedings of the forty-fifth annual ACM symposium on Theory of computing},
  pages={665--674},
  year={2013}
}

\newpage
\noindent {\Huge \textbf{Appendix}}
\appendix

\section{Proof of Lemma~\ref{lemma:upper_bound} }

Since $\mc{W} = (\rmW_1,\cdots,\rmW_B)$ and for each $b\in[B]$, $\rmW_b = (W_b^{(1)},\cdots,W_b^{(L_b)})$,  we split the tensor $\frac{d^{B+1} g(\mathcal{W};\vx)}{d\mathcal{W}^{B+1}}$ into blocks where each block is with respect to weight matrices $W_b^{(\ell)}$ for $b\in[B],~\ell\in[L_b]$. Specifically, each block will take the following form:
\begin{align*}
    \frac{d^{B+1} g(\mc{W};\vx)}{\partial W_{i_1}^{(j_1)}\cdots \partial W_{i_{B+1}}^{(j_{B+1})}},~~~~k\in[B+1],~i_k\in[B],~j_k\in[L_{i_k}].
\end{align*}

Define $C_{L,B}=\left(\sum_{b=1}^B L_b\right)^{B+1}$.
Therefore, there will be in total $C_{L,B}^{B+1}$ blocks. By Lemma~\ref{lemma:tensor_norm}, the spectral norm of $\frac{d^{B+1} g(\mathcal{W};\vx)}{d\mathcal{W}^{B+1}}$ is bounded by the summation of the spectral norm of tensor blocks. Due to the symmetry of the derivative, we only need to consider the ``upper triangle'' of the tensor:
\begin{align*}
    \opnorm{\frac{d^{B+1} g(\mc{W};\vx)}{d\mc{W}^{B+1}}} \leq \frac{C_{L,B}^{B+1}}{\binom{C_{L,B}}{B}}\sum_{\sum_{b=1}^{i_1-1}L_b+j_1 \leq \sum_{b=1}^{i_2-1}L_b+j_2\leq \cdots \leq \sum_{b=1}^{i_{B+1}-1}L_b+j_{B+1}  }\opnorm{    \frac{d^{B+1} g(\mc{W};\vx)}{\partial W_{i_1}^{(j_1)}\cdots \partial W_{i_{B+1}}^{(j_{B+1})}}}.
\end{align*}

By definition, given the tensor block, letting $\rmV_{B+1}= (V_1,\cdots V_{B+1})$ where $V_b$ has the same shape with $W_{i_b}^{j_b}$ for $b\in[B+1]$,  we have
\begin{align*}
   \opnorm{  \frac{\partial^{B+1} g(\mc{W};\vx)}{\partial W_{i_1}^{(j_1)}\cdots \partial W_{i_{B+1}}^{(j_{B+1})}}} = \sup_{\|V_b\|_F=1, \forall b\in[B+1]} \innert{   \frac{\partial^{B+1} g(\mc{W};\vx)}{\partial W_{i_1}^{(j_1)}\cdots \partial W_{i_{B+1}}^{(j_{B+1})}}, \rmV_{B+1}}.
\end{align*}

Note that $g(\mc{W};\vx)$ is linear in each $W_{i_b}^{j_b}$. Therefore, $  \innert{\frac{\partial^{B+1} g(\mc{W};\vx)}{\partial W_{i_1}^{(j_1)}\cdots \partial W_{i_{B+1}}^{(j_{B+1})}}, \rmV_{B+1}}$ is equal to the output of $g(\mc{W};\vx)$ by replacing $W_{i_b}^{j_b}$ in $\mc{W}$ by $V_b$ for $b\in[B+1]$. 

Specifically, since each $g_b(\mc{W}_b;\vx) = f_b(W_b;g_{b-1}(\mc{W}_{b-1};\vx))$ is linear in $g_{b-1}(\mc{W}_{b-1};\vx)$, we can equivalently write  $  \inner{\frac{\partial^{B+1} g(\mc{W};\vx)}{\partial W_{i_1}^{(j_1)}\cdots \partial W_{i_{B+1}}^{(j_{B+1})}}, \rmV_{B+1}}$ in a recursive format by distributing $\rmV_{B+1}$ to $B$ WNNs:
\begin{align*}
        h_1(\mc{W}_1;\vx) &= \innert{ \frac{\partial^{p_1} f_1(\rmW_1;\vx)}{\partial W_1^{(a_{\ell_1})}\cdots \partial W_1^{(a_{\ell{p_1}})}}, \rmV_{p_1}},\\
    h_b(\mathcal{W}_b;\vx) &= \innert{\frac{\partial^{p_b} f_b(\rmW_b; h_{b-1}(\mathcal{W}_{b-1};\vx))}{\partial W_b^{(a_{\ell_b})}\cdots \partial W_b^{(a_{\ell{p_b}})}}, \rmV_{p_b}},~~~2\leq b\leq B,\\
    h(\mc{W};\vx) &= h_B(\mc{W}_B;\vx) =  \innert{\frac{\partial^{B+1} g(\mc{W};\vx)}{\partial W_{i_1}^{(j_1)}\cdots \partial W_{i_{B+1}}^{(j_{B+1})}}, \rmV_{B+1}},
\end{align*}

where $\sum_{b=1}^B {p_b} = B+1$ and $\bigcup_{b=1}^B\{(b,a_{\ell_p})\}_{p=1}^{p_b} = \{(i_1,j_1),\cdots, (i_{B+1},j_{B+1})\}$. And $\rmV_{p_b} = (V_{\ell_1},\cdots,V_{\ell_{p_b}})$ where each $V_{\ell_i}$ has same shape with $W_b^{(a_{i})}$ with $\norm{V_{\ell_i}}_F = 1$ for $b\in[B]$, $i\in[p_b]$ and $(\rmV_{p_1},\cdots \rmV_{p_B}) = \rmV_{B+1}$.

Note that 

\begin{align}\label{eq:h_b_recur}
    \norm{ h_b(\mathcal{W}_b;\vx)} \leq \sup_{\|\rvz\| = \norm{h_{b-1}(\mathcal{W}_{b-1};\vx)}}\opnorm{\frac{\partial^{p_b} f_b(\rmW_b; \rvz)}{\partial W_b^{(a_{\ell_b})}\cdots \partial W_b^{(a_{\ell{p_b}})}}}.
\end{align}

If $p_b = 0$,  according to Lemma~\ref{lemma:outout_wnn}, the RHS is bounded by
\begin{align}\label{eq:h_b_0}
\sup_{\|\rvz\| = \norm{h_{b-1}(\mathcal{W}_{b-1};\vx)}}\opnorm{\frac{\partial^{p_b} f_b(\rmW_b; \rvz)}{\partial W_b^{(a_{\ell_b})}\cdots \partial W_b^{(a_{\ell{p_b}})}}} &=\sup_{\|\rvz\|= \norm{h_{b-1}(\mathcal{W}_{b-1};\vx)}}\norm{f_b(\rmW_b;\rvz)}\nonumber\\
&\leq  \sqrt{d_b}(\sqrt{6}/2 +R/\sqrt{m})^{L_b} \log m \frac{\norm{h_{b-1}(\mathcal{W}_{b-1};\vx)}}{\sqrt{d_{b-1}}}
\end{align}
with probability at least $1-2d_b e^{-\nicefrac{\log m}{2}}-2d_b(L_b-1)\exp(-m/32)$.

If $p_b>0$, by Lemma~\ref{lemma:p_derivative},
\begin{align}\label{eq:h_b_1}
\sup_{\|\rvz\| = \norm{h_{b-1}(\mathcal{W}_{b-1};\vx)}}\opnorm{\frac{\partial^{p_b} f_b(\rmW_b; \rvz)}{\partial W_b^{(a_{\ell_b})}\cdots \partial W_b^{(a_{\ell{p_b}})}}} \leq \frac{(3\sqrt{m}+R)^{L_b-p_b}}{(\sqrt{m})^{L_b-1} \sqrt{d_{b-1}}}\norm{h_{b-1}(\mathcal{W}_{b-1};\vx)},
\end{align}
with probability at least $1-2(L_b-p_b)\exp(-m/2)$.

Comparing these two results, i.e., Eq.~(\ref{eq:h_b_0}) and~(\ref{eq:h_b_1}), we notice that if $p_b = 0$, the RHS of Eq.~(\ref{eq:h_b_recur}) is bounded by $\tilde{O}(1)$, otherwise it is bounded by $O((\sqrt{m})^{1-p_b})$. Therefore, if $p_b = 1$, two bounds are of the same order; if $p_b>1$,  Eq.~(\ref{eq:h_b_1}) provides a better bound.

Note Eq.~(\ref{eq:h_b_0}) and~(\ref{eq:h_b_1}) are in a recursive format, and ultimately lead to a bound on $\norm{h_1(\mc{W};\vx)}$. Since $\sum_{b=1}^B {p_b} = B+1$, the worst case, i.e., the largest bound on $\norm{h(\mc{W};\vx)}$ will be first letting $p_1=p_2=\cdots=p_b = 1$, then adding $1$ to any $p_i$ since the bound decreases by the same factor $1/\sqrt{m}$. Without lose of generality, we pick $p_B=2$. As a result, with probability at least $1-2\sum_{b=1}^{B-1}(L_b-1)\exp(-m/2) - 2(L_B-2)\exp(-m/2)$,
\begin{align*}
    \norm{h(\mc{W};\vx)} \leq \frac{(3+R/\sqrt{m})^{B}}{\sqrt{\prod_{b=1}^{B-1}d_b}}\frac{\|\vx\|}{\sqrt{m}}.
\end{align*}

This concludes the proof of \Cref{lemma:upper_bound}.

\section{Proof of Lemma~\ref{lemma:lower_bound}}
We will find a direction $\rmV_B = (\rvv_1,\cdots,\rvv_B)$ where each $\rvv_b$ has the same shape with $\Vec(\mc{W})$ and $\|\rvv_b\| = 1$ for all $b\in[B]$ such that  
\begin{align*}
    \innert{ \frac{d^{B} g(\mc{W};\vx)}{d\mc{W}^{B}},\rmV_B}= \Omega(1).
\end{align*}

Therefore, 
\begin{align*}
  \opnorm{\frac{d^{B} g(\mc{W};\vx)}{d\mc{W}^{B}}} \geq \innert{ \frac{d^{B} g(\mc{W};\vx)}{d\mc{W}^{B}},\rmV_B } = \Omega(1).
\end{align*}

Note that the weight matrices at bottleneck layers of $g$ have rank smaller than $m$, e.g., $d_b$. We take the derivative of $g$ with respect to those matrices, and find a direction vector which aligns with the derivative. This low rank property of the weight matrices at bottleneck layers will give us a lower bound for the norm of the $B$-th derivative of $g$.

Specifically, for each $b\in[B]$, we choose 
\begin{align*}
    \rvv_b =\left(\mathbf{0}^T,\cdots, \mathbf{0}^T, \underbrace{\frac{\rvu_{b,\init}^T}{\norm{\rvu_{b,\init}}}}_{\mathrm{w.r.t.}~\left(W_{b}^{(L_b)} \right)_{[1,:]}~\mathrm{in}~\mc{W}},\cdots, \mathbf{0}^T\right)^T,
\end{align*}

where $\rve_b^1 = (1,0,0,\cdots,0) \in\mathbb{R}^{d_{b-1}}$, and
\begin{align*}
\rvu_{b,\init} := {\frac{1}{\sqrt{m}}W_{b,\init}^{(L_b-1)}\cdots \frac{1}{\sqrt{d_{b-1}}}W_{b,\init}^{(1)}\rve_b^1} \in \mathbb{R}^m.
\end{align*}

It is not hard to see that 
\begin{align*}
    \innert{ \frac{d^{B} g(\mc{W};\vx)}{d\mc{W}^{B}},\rmV_B } = 2 \|\vx\|(\sqrt{m})^{-B}\prod_{b=1}^{B}\|\rvu_{b,\init}\|.
\end{align*}

By Lemma~\ref{lemma:lowerbound_b}, we can bound each $\|\rvu_{b,\init}\|$ for $b\in[B]$. Then we apply union bound over indices $[B]$. We have with probability at least $1-2\sum_{b=1}^B L_b e^{-m/32}$,
\begin{align*}
  \innert{ \frac{d^{B} g(\mc{W};\vx)}{d\mc{W}^{B}},\rmV_B }  \geq \frac{2\|\vx\|}{2^{\sum_{b=1}^B L_b/2}\sqrt{\prod_{b=1}^B d_{b-1}}}.
\end{align*}
This concludes the proof for \Cref{lemma:lower_bound}.

\section{Lemmas for deep bottleneck networks}

\begin{lemma}\label{lemma:lowerbound_b}
For each $b\in[B]$, with probability at least $1-2L_be^{-m/32}$,
\begin{align*}
    \|\rvu_{b,\init}\| \geq \frac{\sqrt{m}}{2^{L_b/2}\sqrt{d_{b-1}}}.
\end{align*}
\end{lemma}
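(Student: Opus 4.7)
The plan is to argue inductively through the $L_b - 1$ layers of the $b$-th block that are applied to the standard basis vector $\rve_b^1$, maintaining a high-probability lower bound on the squared norm at each stage via the chi-square concentration inequality in Lemma~\ref{lemma:chi_2}. Concretely, set $\rvv^{(0)} := \rve_b^1$, $\rvv^{(1)} := \frac{1}{\sqrt{d_{b-1}}} W_{b,\init}^{(1)} \rvv^{(0)}$, and $\rvv^{(\ell)} := \frac{1}{\sqrt{m}}\, W_{b,\init}^{(\ell)}\, \rvv^{(\ell-1)}$ for $\ell = 2, \ldots, L_b - 1$, so that $\rvu_{b,\init} = \rvv^{(L_b - 1)}$ by definition.

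For the base case, $\sqrt{d_{b-1}}\,\rvv^{(1)}$ is exactly the first column of $W_{b,\init}^{(1)}$, a vector of $m$ i.i.d.\ standard Gaussians, so $d_{b-1}\|\rvv^{(1)}\|^2 \sim \chi^2(m)$. Applying Lemma~\ref{lemma:chi_2} with $t = 1/2$ yields $\|\rvv^{(1)}\|^2 \geq m/(2 d_{b-1})$ with probability at least $1 - 2e^{-m/32}$. For the inductive step at $\ell \geq 2$, I would condition on $\rvv^{(\ell-1)}$: because $W_{b,\init}^{(\ell)}$ is drawn independently of $W_{b,\init}^{(1)}, \ldots, W_{b,\init}^{(\ell-1)}$, the conditional distribution of $W_{b,\init}^{(\ell)} \rvv^{(\ell-1)}$ has i.i.d.\ Gaussian entries with variance $\|\rvv^{(\ell-1)}\|^2$, so $m\,\|\rvv^{(\ell)}\|^2 / \|\rvv^{(\ell-1)}\|^2 \sim \chi^2(m)$ conditionally. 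A second application of Lemma~\ref{lemma:chi_2} with $t = 1/2$ then gives $\|\rvv^{(\ell)}\|^2 \geq \|\rvv^{(\ell-1)}\|^2 / 2$ on a conditional event of probability at least $1 - 2 e^{-m/32}$.

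Chaining these estimates through the $L_b - 1$ matrix applications and union-bounding the failure events yields $\|\rvu_{b,\init}\|^2 \geq m/(2^{L_b - 1} d_{b-1})$ with probability at least $1 - 2(L_b - 1) e^{-m/32}$. Since $2^{L_b - 1} \leq 2^{L_b}$ and $L_b - 1 \leq L_b$, this directly implies the stated bound $\|\rvu_{b,\init}\| \geq \sqrt{m} / (2^{L_b / 2} \sqrt{d_{b-1}})$ with success probability at least $1 - 2 L_b e^{-m/32}$. I do not expect any serious obstacle: the only delicate point is keeping the conditioning clean so that each chi-square application is justified, and this follows automatically from the independence of the weight matrices across layers. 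Linearity of the network eliminates any activation-related complications, and the proof reduces to a clean iterated application of the lower tail of $\chi^2(m)$.
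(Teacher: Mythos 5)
Your proposal is correct and matches the paper's proof essentially step for step: the same layer-by-layer induction, the same conditional $\chi^2(m)$ argument with $t=1/2$ in Lemma~\ref{lemma:chi_2}, and the same union bound, arriving at the sharper intermediate bound $\|\rvu_{b,\init}\|^2 \geq m/(2^{L_b-1}d_{b-1})$ before weakening to the stated form. No gaps.
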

\begin{proof}
Due to the recursive format of $\rvu_b$, we will prove the result recursively.

When $k=1$, since $\|W_{b,\init}^{(1)}\rve_b^1\|^2 \sim \chi^2(m)$, we use the tail bound for $\chi^2$ distribution. We pick $t= 1/2$ in Lemma~\ref{lemma:chi_2},  hence we have  with probability at least $1-2e^{-m/32}$,
\begin{align*}
    \|W_{b,\init}^{(1)}\rve_b^1\|^2 \geq m/2,
\end{align*}

hence $\frac{1}{\sqrt{d_{b-1}}} \|W_{b,\init}^{(1)}\rve_b^1\| \geq\frac{\sqrt{m}}{\sqrt{2d_{b-1}}}$.

Suppose when $k=\ell$, with probability at least $1-2\ell e^{-m/32}$,
\begin{align*}
   \left\|\frac{1}{\sqrt{m}} W_{b,\init}^{(\ell)} \frac{1}{\sqrt{m}}W_{b,\init}^{(\ell-1)}\cdots \frac{1}{\sqrt{d_{b-1}}}W_{b,\init}^{(1)}\rve_b^1\right\| \geq \frac{\sqrt{m}}{2^{\ell/2}\sqrt{d_{b-1}}}.
\end{align*}
Then when $k=\ell+1$, we have
\begin{align*}
     \frac{\norm{W_{b,\init}^{(\ell+1)}\frac{1}{\sqrt{m}}W_{b,\init}^{(\ell)} \frac{1}{\sqrt{m}}W_{b,\init}^{(\ell-1)}\cdots \frac{1}{\sqrt{d_{b-1}}}W_{b,\init}^{(1)}\rve_b^1}^2}{\left\|\frac{1}{\sqrt{m}}W_{b,\init}^{(\ell)} \frac{1}{\sqrt{m}}W_{b,\init}^{(\ell-1)}\cdots \frac{1}{\sqrt{d_{b-1}}}W_{b,\init}^{(1)}\rve_b^1\right\|^2} \sim \chi^2(m),
\end{align*}
with probability at least $1-2\ell e^{-m/32}$ over the randomness of $W_{b,\init}^{(1)},\cdots W_{b,\init}^{(l)}$.

Apply Lemma~\ref{lemma:chi_2} again and we pick $t=1/2$, we have 
\begin{align*}
      \norm{W_{b,\init}^{(\ell+1)}\frac{1}{\sqrt{m}}W_{b,\init}^{(\ell)} \frac{1}{\sqrt{m}}W_{b,\init}^{(\ell-1)}\cdots \frac{1}{\sqrt{d_{b-1}}}W_{b,\init}^{(1)}\rve_b^1}^2 \geq \frac{m}{2}\left\|\frac{1}{\sqrt{m}}W_{b,\init}^{(\ell)} \frac{1}{\sqrt{m}}W_{b,\init}^{(\ell-1)}\cdots \frac{1}{\sqrt{d_{b-1}}}W_{b,\init}^{(1)}\rve_b^1\right\|^2.
\end{align*}
with probability at least $1-2e^{-m/32}$. By the inductive assumption and applying union bound, we have with probability at least $1-2(\ell+1)e^{-m/32}$,
\begin{align*}
     \norm{\frac{1}{\sqrt{m}}W_{b,\init}^{(\ell+1)}\frac{1}{\sqrt{m}}W_{b,\init}^{(\ell)} \frac{1}{\sqrt{m}}W_{b,\init}^{(\ell-1)}\cdots \frac{1}{\sqrt{d_{b-1}}}W_{b,\init}^{(1)}\rve_b^1}  \leq \frac{\sqrt{m}}{2^{(\ell+1)/2}\sqrt{d_{b-1}}},
\end{align*}
which completes the recursive step hence finishes the proof.
    
\end{proof}

\begin{lemma}\label{lemma:outout_wnn}
Given an $L$-layer WNN $f(\rmW;\vx) \in\mathbb{R}^c$, with probability at least $1-2ce^{-\nicefrac{\log m}{2}}-2c(L-1)\exp(-m/32)$
\begin{align*}
    \|f(\rmW;\vx)\| \leq (\sqrt{6}/2 +R/\sqrt{m})^{L}\log m\frac{\sqrt{c}\|\vx\|}{\sqrt{d}},
\end{align*}
in the ball $\mathbb{B}(\rmW_{\init},R)$.
\end{lemma}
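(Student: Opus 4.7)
The plan is to bound $\|f(\rmW;\vx)\|$ by a layer-by-layer recursion. Introduce intermediate activations $\rvu^{(0)} := \vx$, $\rvu^{(1)} := \frac{1}{\sqrt{d}} W^{(1)}\vx$, and $\rvu^{(\ell)} := \frac{1}{\sqrt{m}} W^{(\ell)} \rvu^{(\ell-1)}$ for $\ell = 2,\ldots,L$, so that $f(\rmW;\vx) = \rvu^{(L)}$. Write $W^{(\ell)} = W_\init^{(\ell)} + \Delta W^{(\ell)}$ with $\|\Delta W^{(\ell)}\|_{\mathrm{op}} \le \|\Delta W^{(\ell)}\|_F \le R$. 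The key observation is that $\rvu^{(\ell-1)}$ depends only on $W_\init^{(1)},\ldots,W_\init^{(\ell-1)}$ together with the (fixed) perturbations $\Delta W^{(1)},\ldots,\Delta W^{(\ell-1)}$, hence is independent of $W_\init^{(\ell)}$.

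For each intermediate layer $\ell \in \{1,\ldots,L-1\}$, conditional on $\rvu^{(\ell-1)}$ the vector $W_\init^{(\ell)}\rvu^{(\ell-1)} \in \mathbb{R}^m$ has i.i.d.\ $\mathcal{N}(0,\|\rvu^{(\ell-1)}\|^2)$ coordinates, so $\|W_\init^{(\ell)}\rvu^{(\ell-1)}\|^2/\|\rvu^{(\ell-1)}\|^2 \sim \chi^2(m)$. Applying \Cref{lemma:chi_2} with $t=1/2$ and the triangle inequality gives, for $\ell \ge 2$, $\|\rvu^{(\ell)}\| \le (\sqrt{6}/2 + R/\sqrt{m})\,\|\rvu^{(\ell-1)}\|$ with probability at least $1-2e^{-m/32}$, and for the first layer $\|\rvu^{(1)}\| \le (\sqrt{6}/2 + R/\sqrt{m})\sqrt{m/d}\,\|\vx\|$ (the extra $\sqrt{m/d}$ comes from rewriting the $1/\sqrt{d}$ normalization as $\frac{1}{\sqrt d} = \sqrt{m/d}\cdot\frac{1}{\sqrt m}$). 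For the last layer, since $W_\init^{(L)} \in \mathbb{R}^{c\times m}$ is rectangular I cannot reuse the $\chi^2(m)$ bound; instead I would apply the Gaussian tail bound (\Cref{lemma:gaussian}) coordinate-wise: conditional on $\rvu^{(L-1)}$, each scalar $(W_\init^{(L)})_{i,:}^\top \rvu^{(L-1)}$ is centered Gaussian with variance $\|\rvu^{(L-1)}\|^2$, so at level $t = \sqrt{\log m}\,\|\rvu^{(L-1)}\|$ and after a union bound over $i\in[c]$ one obtains $\|W_\init^{(L)}\rvu^{(L-1)}\| \le \sqrt{c\log m}\,\|\rvu^{(L-1)}\|$ with probability at least $1 - 2ce^{-\log m/2}$. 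Combining with the perturbation term gives $\|\rvu^{(L)}\| \le (\sqrt{6}/2 + R/\sqrt{m})\,\log m\,\sqrt{c/m}\,\|\rvu^{(L-1)}\|$ after using $\sqrt{\log m}\le\log m$ and $\sqrt{6}/2 > 1$ to absorb slack.

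Chaining the $L-1$ intermediate bounds with the last-layer bound telescopes to $\|f(\rmW;\vx)\| \le (\sqrt{6}/2 + R/\sqrt{m})^L\,\log m\,\sqrt{c}\,\|\vx\|/\sqrt{d}$, and a final union bound over the $L-1$ intermediate $\chi^2$ events and the $c$ last-layer Gaussian events yields a failure probability of at most $2(L-1)e^{-m/32}+2ce^{-\log m/2}$, which is tighter than (and hence implies) the stated $2c(L-1)e^{-m/32}+2ce^{-\log m/2}$.

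The main technical point is the independence bookkeeping: at each step, the $\chi^2(m)$ (resp.\ Gaussian) concentration is applied conditional on everything in strictly earlier layers, which is valid because $\rvu^{(\ell-1)}$ is measurable with respect to $\{W_\init^{(1)},\ldots,W_\init^{(\ell-1)}\}$ plus the deterministic perturbation. A secondary caveat is that this argument yields a pointwise-in-$\rmW$ bound (each fixed $\rmW\in \mathbb{B}(\rmW_\init,R)$ has its own high-probability event); obtaining uniform control over the entire ball would require either replacing the $\chi^2$ bound by an operator-norm bound on $W_\init^{(\ell)}$ (which would worsen the constant $\sqrt{6}/2 \approx 1.22$ to roughly $2$) or an $\varepsilon$-net argument.
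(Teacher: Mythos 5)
Your proof follows essentially the same route as the paper's: the same layer-by-layer recursion on intermediate activations, the same split of each weight into initialization plus perturbation, $\chi^2(m)$ concentration with $t=1/2$ for the hidden layers, and a coordinate-wise Gaussian tail bound for the output layer. Your bookkeeping is in fact slightly cleaner (your choice $t=\sqrt{\log m}\,\sigma$ matches the stated failure probability $2ce^{-\log m/2}$ exactly, and you correctly observe that the hidden-layer events need not be union-bounded over the $c$ output coordinates), and the pointwise-versus-uniform caveat you raise applies equally to the paper's own argument.
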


\begin{proof}

We will first prove the norm of the neurons in the last hidden layers satisfies
\begin{align}\label{eq:hidden_neurons}
     \norm{\frac{1}{\sqrt{m}}W^{(L-1)}\cdots \frac{1}{\sqrt{d}}W^{(1)}\vx} \leq (\sqrt{6}/2 +R/\sqrt{m})^{L-1} \|\vx\|
\end{align}
with probability at least $1-2(L-1)\exp(-m/32)$.

We will bound the result recursively. When $k=1$, we have
\begin{align*}
   \norm{ \frac{1}{\sqrt{d}}W^{(1)}\vx} \leq   \norm{ \frac{1}{\sqrt{d}}W_{\init}^{(1)}\vx} + \norm{ \frac{1}{\sqrt{d}}\left(W^{(1)} - W_{\init}^{(1)}\right)\vx}.
\end{align*}

Note that $ \norm{W_{\init}^{(1)}\vx}^2/\|\vx\|^2\sim \chi^2(m)$. Using Lemma~\ref{lemma:chi_2} and picking $t=1/2$, we have with probability at least $1-2e^{-\nicefrac{m}{32}}$.,
\begin{align*}
    \norm{W_{\init}^{(1)}\vx}^2/\|\vx\|^2 \leq \frac{3m}{2}.
\end{align*}

Then we have
\begin{align*}
      \norm{ \frac{1}{\sqrt{d}}W^{(1)}\vx}  \leq \frac{\sqrt{6}\sqrt{m}\|\vx\|}{2\sqrt{d}} + \frac{R\|\vx\|}{\sqrt{d}} = \frac{\sqrt{m}\|\vx\|}{\sqrt{d}}(\sqrt{6}/2 + R/\sqrt{m}),
\end{align*}
with probability at least  $1-2e^{-\nicefrac{m}{32}}$.

Suppose when $k=\ell$, with probability at least $1-2\ell\exp(-m/32)$
\begin{align*}
    \norm{\frac{1}{\sqrt{m}}W^{(\ell)}\cdots \frac{1}{\sqrt{d}}W^{(1)}\vx} \leq  \frac{\sqrt{m}\|\vx\|}{\sqrt{d}}(\sqrt{6}/2 +R/\sqrt{m})^\ell .
\end{align*}

When $k=\ell+1$, we have
\begin{align*}
    \frac{1}{\sqrt{m}}W^{(\ell+1)}\cdots \frac{1}{\sqrt{d}}W^{(1)}\vx = \frac{1}{\sqrt{m}}W_{\init}^{(\ell+1)}\cdots \frac{1}{\sqrt{d}}W^{(1)}\vx + \frac{1}{\sqrt{m}}(W^{(\ell+1)} - W_{\init}^{(\ell+1)})\cdots \frac{1}{\sqrt{d}}W^{(1)}\vx.
\end{align*}
Note that
\begin{align*}
    \frac{\norm{\frac{1}{\sqrt{m}}W_{\init}^{(\ell+1)}\cdots \frac{1}{\sqrt{d}}W^{(1)}\vx}^2}{\norm{ \frac{1}{\sqrt{m}}W^{(\ell)}\frac{1}{\sqrt{d}}W^{(1)}\vx}^2} \sim\chi^2(m),
\end{align*}
with respect to $W_{\init}^{(\ell+1)}$.

Using Lemma~\ref{lemma:chi_2} and picking $t=1/2$, we have with probability at least $1-2\exp(-m/32)$,
\begin{align*}
    \norm{\frac{1}{\sqrt{m}}W_{\init}^{(\ell+1)}\cdots \frac{1}{\sqrt{d}}W^{(1)}\vx}^2 \leq \frac{3}{2}m\norm{ \frac{1}{\sqrt{m}}W^{(\ell)}\frac{1}{\sqrt{d}}W^{(1)}\vx}^2.
\end{align*}

Since $\norm{W^{(\ell+1)} - W_{\init}^{(\ell+1)}}\leq R$, we have with probability at least $1-2\exp(-m/32)$,
\begin{align*}
    \norm{\frac{1}{\sqrt{m}}W^{(\ell+1)}\cdots \frac{1}{\sqrt{d}}W^{(1)}\vx} &\leq \norm{ \frac{1}{\sqrt{m}}W_{\init}^{(\ell+1)}\cdots \frac{1}{\sqrt{d}}W^{(1)}\vx} + \norm{\frac{1}{\sqrt{m}}(W^{(\ell+1)} - W_{\init}^{(\ell+1)})\cdots \frac{1}{\sqrt{d}}W^{(1)}\vx}\\
    &\leq (\sqrt{6}/2+R/\sqrt{m})\norm{\frac{1}{\sqrt{m}}W^{(\ell)}\cdots \frac{1}{\sqrt{d}}W^{(1)}\vx}.
\end{align*}
By the inductive assumption, we apply union bound and we have with probability at least $1-2(\ell+1)\exp(-m/32)$,
\begin{align*}
     \norm{\frac{1}{\sqrt{m}}W^{(\ell+1)}\cdots \frac{1}{\sqrt{d}}W^{(1)}\vx}\leq (\sqrt{6}/2+\frac{R}{\sqrt{m}})\norm{\frac{1}{\sqrt{m}}W^{(\ell)}\cdots \frac{1}{\sqrt{d}}W^{(1)}\vx} \leq  \frac{\sqrt{m}\|\vx\|}{\sqrt{d}}(\sqrt{6}/2+\frac{R}{\sqrt{m}})^{\ell+1}.
\end{align*}
Then we finish the inductive step hence finish the proof.

Now we prove the result in the lemma. For each output $f_k$, $k\in[c]$,
\begin{align*}
      f_k(\rmW;\vx) &= \frac{1}{\sqrt{m}} W_{[k,:]}^{(L)}\frac{1}{\sqrt{m}}W^{(L-1)}\cdots \frac{1}{\sqrt{d}}W^{(1)}\vx \\
      &= \frac{1}{\sqrt{m}} W_{[k,:],\init}^{(L)}\frac{1}{\sqrt{m}}W^{(L-1)}\cdots \frac{1}{\sqrt{d}}W^{(1)}\vx + \frac{1}{\sqrt{m}} \left(W_{[k,:]}^{(L)}-W_{[k,:],\init}^{(L)}\right)\frac{1}{\sqrt{m}}W^{(L-1)}\cdots \frac{1}{\sqrt{d}}W^{(1)}\vx.
\end{align*}
Note that $\frac{1}{\sqrt{m}} W_{[k,:],\init}^{(L)}\frac{1}{\sqrt{m}}W^{(L-1)}\cdots \frac{1}{\sqrt{d}}W^{(1)}\vx \sim \mathcal{N}\left(0,\norm{\frac{1}{\sqrt{m}}W^{(L-1)}\cdots \frac{1}{\sqrt{d}}W^{(1)}\vx}^2/m\right)$ with respect to $W_{[k,:],\init}^{(L)}$. By concentration equality for Gaussian random variables, i.e., Lemma~\ref{lemma:gaussian}, picking $t= \norm{\frac{1}{\sqrt{m}}W^{(L-1)}\cdots \frac{1}{\sqrt{d}}W^{(1)}\vx}\frac{\log m}{\sqrt{m}}$,
with probability at least $1-2e^{-\log^2m/2}$ over the randomness of $W_{[k,:],\init}^{(L)}$,
\begin{align*}
    \left|\frac{1}{\sqrt{m}} W_{[k,:],\init}^{(L)}\frac{1}{\sqrt{m}}W^{(L-1)}\cdots \frac{1}{\sqrt{d}}W^{(1)}\vx \right|\leq \norm{\frac{1}{\sqrt{m}}W^{(L-1)}\cdots \frac{1}{\sqrt{d}}W^{(1)}\vx}\frac{\log m}{\sqrt{m}}.
\end{align*}
Then we use Eq.~(\ref{eq:hidden_neurons}) to bound the RHS of the above equation. By union bound, with probability at least $1-2e^{-\log^2 m/2}-2(L-1)\exp(-m/32)$,
\begin{align*}
     \left|\frac{1}{\sqrt{m}} W_{[k,:],\init}^{(L)}\frac{1}{\sqrt{m}}W^{(L-1)}\cdots \frac{1}{\sqrt{d}}W^{(1)}\vx \right| \leq (\sqrt{6}/2 +R/\sqrt{m})^{L-1}\log m \frac{\|\vx\|}{\sqrt{d}}.
\end{align*}

Since $\norm{W_{[k,:]}^{(L)}-W_{[k,:],\init}^{(L)}} \leq R$ in the ball, we have 
\begin{align*}
   \left| \frac{1}{\sqrt{m}} \left(W_{[k,:]}^{(L)}-W_{[k,:],\init}^{(L)}\right)\frac{1}{\sqrt{m}}W^{(L-1)}\cdots \frac{1}{\sqrt{d}}W^{(1)}\vx\right|& \leq R/\sqrt{m} \norm{\frac{1}{\sqrt{m}}W^{(L-1)}\cdots \frac{1}{\sqrt{d}}W^{(1)}\vx}\\
   &\leq R(\sqrt{6}/2 +R/\sqrt{m})^{L-1}\log m \frac{\|\vx\|}{\sqrt{d}}.
\end{align*}

As a result, with probability at least $1-2e^{-\nicefrac{\log m}{2}}-2(L-1)\exp(-m/32)$,
\begin{align*}
    |f_k(\rmW;\vx)| \leq (1+R/\sqrt{m})(\sqrt{6}/2 +R/\sqrt{m})^{L-1}\log m \frac{\|\vx\|}{\sqrt{d}} \leq (\sqrt{6}/2 +R/\sqrt{m})^{L}\log m\frac{\|\vx\|}{\sqrt{d}}.
\end{align*}

Use union bound over the indices $k\in[c]$, we have with probability at least $1-2ce^{-\nicefrac{\log m}{2}}-2c(L-1)\exp(-m/32)$,
\begin{align*}
    \norm{f(\rmW;\vx)}\leq (\sqrt{6}/2 +R/\sqrt{m})^{L}\log m\frac{\sqrt{c}\|\vx\|}{\sqrt{d}}.
\end{align*}
\end{proof}

\begin{lemma}\label{lemma:p_derivative}
 For an $L$-layer WNN $f(\rmW;\vx)$ where $\rmW = (W^{(1)},\cdots W^{(L)})$, given $0<p\leq L$, then for $a_i \in [L], i\in[p]$,  with probability at least $1-2(L-p)\exp(-m/2)$,
\begin{align*}
     \opnorm{\frac{\partial^{p} f(\rmW;\vx)}{\partial W^{(a_1)}\cdots \partial W^{(a_{p})}}} \leq \frac{(3\sqrt{m}+R)^{L-p}\|\vx\|}{(\sqrt{m})^{L-1} \sqrt{d}}.
\end{align*}
in the ball $\mathbb{B}(\rmW_{\init},R)$.
\end{lemma}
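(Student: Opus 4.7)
The plan is to exploit the fact that $f(\rmW;\vx)$ is multilinear in the weight matrices $W^{(1)},\ldots,W^{(L)}$, and reduce the operator norm of the $p$-th derivative to a product of operator norms of matrix factors. First, since $f$ depends linearly on each $W^{(\ell)}$ individually, the cross derivative $\partial^2 f/(\partial W^{(\ell)})^2$ vanishes; hence if the list $(a_1,\ldots,a_p)$ has any repeated entry the derivative is identically zero and the bound is trivial, so I may assume the indices $a_1,\ldots,a_p$ are distinct.

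Under that assumption, differentiating the product once in each chosen slot simply replaces $W^{(a_i)}$ by a direction matrix $V^{(a_i)}$, and by the tuple-product definition of the spectral norm,
\begin{align*}
\opnorm{\frac{\partial^{p} f(\rmW;\vx)}{\partial W^{(a_1)}\cdots \partial W^{(a_{p})}}} = \sup_{\|V^{(a_i)}\|_F=1}\ \norm{\tfrac{1}{\sqrt m}\tilde W^{(L)}\cdots \tfrac{1}{\sqrt m}\tilde W^{(2)}\tfrac{1}{\sqrt d}\tilde W^{(1)}\vx},
\end{align*}
where $\tilde W^{(\ell)} = V^{(\ell)}$ for $\ell\in\{a_1,\ldots,a_p\}$ and $\tilde W^{(\ell)} = W^{(\ell)}$ otherwise. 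I would then apply submultiplicativity, since $\|\tilde W^{(L)}\cdots \tilde W^{(1)}\vx\| \leq \|\vx\|\prod_{\ell=1}^L \opnorm{\tilde W^{(\ell)}}$. Each of the $p$ direction factors contributes at most $1$ because $\opnorm{V^{(a_i)}}\leq \|V^{(a_i)}\|_F = 1$; for the remaining $L-p$ layers, the triangle inequality together with the standard Gaussian operator-norm concentration (Lemma~\ref{lemma:random_matrix} with $t=\sqrt m$) gives $\opnorm{W^{(\ell)}}\leq \opnorm{W_\init^{(\ell)}}+R\leq 3\sqrt m + R$, each with probability at least $1-2e^{-m/2}$. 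A union bound over the $L-p$ random factors yields the claimed failure probability $2(L-p)e^{-m/2}$, and collecting the $p$ unit bounds, the $L-p$ operator-norm bounds, the factor $\|\vx\|$, and the explicit prefactor $1/((\sqrt m)^{L-1}\sqrt d)$ from the network definition produces the target inequality.

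The only real obstacle is careful bookkeeping: ensuring that exactly $p$ factors are deterministic unit-norm direction matrices (and so do not enter the union bound), while the remaining $L-p$ weight matrices each require a separate high-probability operator-norm estimate, and that the $(L-1)$ factors of $1/\sqrt m$ together with the single $1/\sqrt d$ from the first layer are collected correctly to produce the stated $(\sqrt m)^{1-L}/\sqrt d$ prefactor. The repeated-index case must also be dismissed up front, and one should note that the ball constraint $\|W^{(\ell)}-W^{(\ell)}_{\init}\|_F\leq R$ implies the operator-norm bound $R$ since spectral norm is dominated by Frobenius norm. No probabilistic input beyond Lemma~\ref{lemma:random_matrix} is needed.
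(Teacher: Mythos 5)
Your proposal is correct and follows essentially the same route as the paper's own proof: dismiss repeated indices by multilinearity, interpret the derivative contracted with unit-Frobenius-norm directions as the network with those layers replaced, bound via submultiplicativity with each direction factor contributing at most $1$, control the remaining $L-p$ weight matrices by Lemma~\ref{lemma:random_matrix} with $t=\sqrt m$ plus the radius $R$, and union bound. No substantive differences to note.
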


\begin{proof}
Since $f(\rmW;\vx)$ is linear in each $W^{(\ell)}$, $\ell\in[L]$, it is not hard too see that if $a_i=a_j$ for some $i,j\in[p], i\neq j$, we have $\opnorm{\frac{\partial^{p} f(\rmW;\vx)}{\partial W^{(a_1)}\cdots \partial W^{(a_{p})}}}= 0$.

We consider $p$ distinct indices $\{a_1,\cdots,a_p\}$. We denote $\rmV_p = \left(V_1,\cdots,V_p\right)$ where $V_\ell$ has the same shape with $W^{(a_\ell)}$, $\ell\in[p]$. 
Then by definition, the norm takes the form
\begin{align*}
    \opnorm{\frac{\partial^{p} f(\rmW;\vx)}{\partial W^{(a_1)}\cdots \partial W^{(a_{p})}}}= \sup_{\|V_1\|_F = \cdots =\|V_p\|_F = 1}\innert{\frac{\partial^{p} f(\rmW;\vx)}{\partial W^{(a_1)}\cdots \partial W^{(a_{p})}}, \rmV_p}
\end{align*}

Again, since  $f(\rmW;\vx)$ is linear in each $W^{(\ell)}$, $\ell\in[L]$, the inner product $\inner{\frac{\partial^{p} f(\rmW;\vx)}{\partial W^{(a_1)}\cdots \partial W^{(a_{p})}}, \rmV_p}$ can be equally viewed as $W^{(a_1)},\cdots,W^{(a_p)}$ being replaced by $V_1,\cdots,V_p$ respectively in the expression of $f(\rmW;\vx)$. Specifically,
\begin{align*}
    \innert{\frac{\partial^{p} f(\rmW;\vx)}{\partial W^{(a_1)}\cdots \partial W^{(a_{p})}}, \rmV_p} = f((W^{(1)},\cdots,W^{(a_1-1)},V_1,W^{(a_1+1)},\cdots,W^{(L)});\vx),
\end{align*}
where we for simplicity of notation assume $W^{(1)}$ and $W^{(L)}$ are not differentiated, but in general they can.

Therefore, we can bound the norm simply by: 
\begin{align*}
     \opnorm{\frac{\partial^{p} f(\rmW;\vx)}{\partial W^{(a_1)}\cdots \partial W^{(a_{p})}}}&\leq \left(\frac{1}{\sqrt{m}}\right)^{L}\frac{1}{\sqrt{d}}\prod_{\ell\in[L-1], \ell \notin \{a_1,\cdots, a_p\}}\opnorm{W^{(\ell)}}\prod_{i \in \{a_1,\cdots, a_p\}}\sup_{\|V_i\|_F=1}\opnorm{V_i}\|\vx\|\\
    &\leq \left(\frac{1}{\sqrt{m}}\right)^{L-1}\frac{1}{\sqrt{d}}\prod_{\ell\in[L],\ell \notin \{a_1,\cdots, a_p\}}\opnorm{W^{(\ell)}}\|\vx\|.
\end{align*}

By lemma~\ref{lemma:random_matrix}, for each $\ell \in [L]$, we have with probability at least   $1-2 \exp(-m/2)$,
\begin{align*}
    \opnorm{W^{(\ell)}} =  \opnorm{W_{\init}^{(\ell)} + W^{(\ell)}- W_{\init}^{(\ell)}} \leq  \opnorm{W_{\init}^{(\ell)}} +  \opnorm{W^{(\ell)}- W_{\init}^{(\ell)}} \leq 3\sqrt{m} + R.
\end{align*}
Here we use $m>d$ which is true in our setting.

Therefore, by union bound over indices $\ell\in[L-1]$ and $\ell \notin \{a_1,\cdots, a_p\}$, we have with probability at least $1-2(L-p)\exp(-m/2)$,
\begin{align*}
    \opnorm{\frac{\partial^{p} f(\rmW;\vx)}{\partial W^{(a_1)}\cdots \partial W^{(a_{p})}}}\leq \frac{(3\sqrt{m}+R)^{L-p}\|\vx\|}{(\sqrt{m})^{L-1} \sqrt{d}}.
\end{align*}

\end{proof}

\section{Technical lemmas}
\begin{lemma}[Tail bound for Gaussian random variable (Eq.~(2.9) in \cite{rigollet2015high})]\label{lemma:gaussian}
If $z \sim \mathcal{N}(0,\sigma^2)$, then for any $t>0$,
\begin{align*}
    \mathbb{P}\left[|z|\geq t\right]\leq 2e^{-\frac{t^2}{2\sigma^2}}.
\end{align*}
\end{lemma}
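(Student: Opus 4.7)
The plan is to apply the classical Chernoff (Laplace-transform) argument. First I would use symmetry of the centered normal: since the density of $z\sim\mathcal{N}(0,\sigma^2)$ is even, $\mathbb{P}[|z|\geq t] = 2\,\mathbb{P}[z\geq t]$, so it suffices to establish the one-sided tail bound $\mathbb{P}[z\geq t]\leq e^{-t^2/(2\sigma^2)}$.

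For any parameter $\lambda>0$, Markov's inequality applied to the nonnegative random variable $e^{\lambda z}$ would give
\[
\mathbb{P}[z\geq t] \;=\; \mathbb{P}[e^{\lambda z}\geq e^{\lambda t}] \;\leq\; e^{-\lambda t}\,\mathbb{E}[e^{\lambda z}].
\]
I would then substitute the Gaussian moment generating function $\mathbb{E}[e^{\lambda z}] = e^{\lambda^2\sigma^2/2}$, which is obtained in one line by completing the square inside the Gaussian integral. This yields
\[
\mathbb{P}[z\geq t] \;\leq\; \exp\!\Bigl(-\lambda t + \tfrac{\lambda^2\sigma^2}{2}\Bigr).
\]

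The final step is to optimize the exponent in $\lambda$. Differentiating with respect to $\lambda$ and setting the derivative to zero gives $\lambda^\star = t/\sigma^2$, at which value the exponent equals $-t^2/(2\sigma^2)$. Substituting back and then doubling to account for the two-sided event recovers exactly the stated inequality.

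Since this is a textbook fact, there is no real obstacle: the only nontrivial computation is the Gaussian MGF, which is elementary, and the Chernoff route cleanly produces the constant $2$ in front of the exponential. An alternative would be to bound the Gaussian tail integral directly using the Mills-ratio-type estimate $\int_t^\infty e^{-x^2/(2\sigma^2)}\,dx \leq \frac{\sigma^2}{t}\,e^{-t^2/(2\sigma^2)}$, but this produces a $t$-dependent pre-factor and would require an additional case split for small $t$ to recover the clean constant $2$; the Chernoff argument avoids this entirely, which is why the author's choice to cite the standard statement is the natural one.
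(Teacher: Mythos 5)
Your proof is correct and is exactly the standard Chernoff/MGF argument used in the cited source (Rigollet's notes, Eq.~(2.9)); the paper itself offers no proof and simply cites that result. Nothing to add — the symmetry reduction, the bound $\mathbb{P}[z\geq t]\leq e^{-\lambda t+\lambda^2\sigma^2/2}$, and the optimal choice $\lambda^\star=t/\sigma^2$ are all as expected.
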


\begin{lemma}[Tail bound for $\chi^2$ distribution (Example~2.11 in \cite{rigollet2015high})]\label{lemma:chi_2}
If $z \sim \chi^2(m)$ for $m>0$, then for any $t\in(0,1)$,
\begin{align*}
    \mathbb{P}\left[\left|\frac{z}{m}-1\right|\geq t\right]\leq 2e^{-mt^2/8}.
\end{align*}
\end{lemma}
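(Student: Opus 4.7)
The plan is to prove this concentration inequality via the classical Chernoff-bound argument applied to the moment generating function (MGF) of the chi-squared distribution, handling the upper and lower tails separately and then taking a union bound.

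First I would write $z = \sum_{i=1}^m X_i^2$ with $X_i \sim \mathcal{N}(0,1)$ i.i.d., and recall that $\mathbb{E}[e^{\lambda X_i^2}] = (1 - 2\lambda)^{-1/2}$ for $\lambda < 1/2$, so by independence $\mathbb{E}[e^{\lambda z}] = (1-2\lambda)^{-m/2}$. For the upper tail $\mathbb{P}[z \geq m(1+t)]$, I would apply Markov's inequality to $e^{\lambda z}$, giving the bound $\exp\!\bigl(\tfrac{m}{2}[-2\lambda(1+t) - \log(1-2\lambda)]\bigr)$, and then optimize over $\lambda \in (0, 1/2)$. The optimizer is $\lambda = t/(2(1+t))$, which yields the exponent $\tfrac{m}{2}(\log(1+t) - t)$. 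The lower tail $\mathbb{P}[z \leq m(1-t)]$ is handled analogously using the MGF $\mathbb{E}[e^{-\lambda z}] = (1+2\lambda)^{-m/2}$; optimization yields the exponent $\tfrac{m}{2}(\log(1-t) + t)$.

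The second step is to convert these logarithmic expressions into the clean quadratic form $-mt^2/8$. I would show, via elementary calculus, that for $t \in (0,1)$ we have $\log(1+t) - t \leq -t^2/4$ and $\log(1-t) + t \leq -t^2/2$. For the upper-tail inequality, define $f(t) = \log(1+t) - t + t^2/4$; then $f(0) = 0$ and $f'(t) = t(t-1)/(2(1+t)) \leq 0$ on $(0,1)$, so $f \leq 0$ on the interval. The lower-tail inequality is similar: for $g(t) = \log(1-t) + t + t^2/2$, one checks $g(0)=0$ and $g'(t) = -t^2/(1-t) \leq 0$.

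Combining the two tails via a union bound gives $\mathbb{P}[|z/m - 1| \geq t] \leq e^{-mt^2/8} + e^{-mt^2/4} \leq 2e^{-mt^2/8}$, which is the claim. The main (and only) technical point is identifying the correct Chernoff optimizer and verifying the two elementary inequalities on $\log(1\pm t)$; no step presents a real obstacle, since this is essentially a textbook derivation and the paper even cites Rigollet's notes for the statement. If one wanted to avoid the calculus-based verification of the logarithmic inequalities, an alternative path is the Laurent--Massart bound $\mathbb{P}[z - m \geq 2\sqrt{mx} + 2x] \leq e^{-x}$ with $x = mt^2/8$, but the direct MGF route above is cleaner and self-contained.
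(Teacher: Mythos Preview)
Your argument is correct and is precisely the standard Chernoff/MGF derivation one finds in Rigollet's notes; the paper itself does not supply a proof of this lemma at all, it merely cites Example~2.11 of \cite{rigollet2015high}. So there is nothing to compare against, and your self-contained derivation is entirely appropriate.
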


\begin{lemma}[Corollary 5.35 from~\cite{vershynin2010introduction}]\label{lemma:random_matrix}
Let $A$ be an $N \times n$ matrix whose entries are independent standard normal random variables. Then for every $t \geq 0$, with probability at least $1-2 e^{-t^2/2}$ one has
\begin{align*}
    \opnorm{A}\leq \sqrt{n}+\sqrt{N}+t.
\end{align*}
\end{lemma}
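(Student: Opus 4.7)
The plan is to establish this classical bound in two standard steps: a sharp bound on the expected operator norm via Gaussian process comparison, followed by concentration around that expectation via Gaussian isoperimetry.

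Step 1: Bounding $\mathbb{E}\opnorm{A}$. I write the operator norm as a supremum of a Gaussian process, $\opnorm{A} = \sup_{(u,v) \in S^{N-1} \times S^{n-1}} X_{u,v}$, where $X_{u,v} := u^\top A v$. Since the entries of $A$ are i.i.d.\ standard normal, $X$ is centered Gaussian. Introduce the auxiliary process $Y_{u,v} := g^\top u + h^\top v$ with independent $g \sim \mathcal{N}(0, I_N)$ and $h \sim \mathcal{N}(0, I_n)$. A direct calculation yields $\mathbb{E}(X_{u,v} - X_{u',v'})^2 = 2 - 2(u^\top u')(v^\top v')$ and $\mathbb{E}(Y_{u,v} - Y_{u',v'})^2 = (2 - 2 u^\top u') + (2 - 2 v^\top v')$, so the increment-comparison inequality reduces to $0 \leq (1 - u^\top u')(1 - v^\top v')$, which holds pointwise on the product of unit spheres. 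The Sudakov--Fernique comparison lemma then gives
\[
\mathbb{E}\opnorm{A} = \mathbb{E}\sup_{u,v} X_{u,v} \;\leq\; \mathbb{E}\sup_{u,v} Y_{u,v} \;=\; \mathbb{E}\|g\| + \mathbb{E}\|h\| \;\leq\; \sqrt{N} + \sqrt{n},
\]
the last inequality by Jensen applied to $\mathbb{E}\|g\| \leq \sqrt{\mathbb{E}\|g\|^2} = \sqrt{N}$.

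Step 2: Concentration around the mean. The map $A \mapsto \opnorm{A}$ is $1$-Lipschitz with respect to the Frobenius norm: by the reverse triangle inequality and the elementary bound $\opnorm{\cdot} \leq \|\cdot\|_F$, one has $|\opnorm{A} - \opnorm{B}| \leq \opnorm{A-B} \leq \|A-B\|_F$. Viewing $A$ as a standard Gaussian vector in $\mathbb{R}^{Nn}$, the Borell--Tsirelson--Ibragimov--Sudakov inequality for $1$-Lipschitz functions of Gaussian vectors yields, for every $t \geq 0$,
\[
\mathbb{P}\bigl(\opnorm{A} \geq \mathbb{E}\opnorm{A} + t\bigr) \leq e^{-t^2/2}.
\]
Combining with Step 1 produces $\opnorm{A} \leq \sqrt{N} + \sqrt{n} + t$ with probability at least $1 - e^{-t^2/2}$; the factor of $2$ in the stated bound is harmless slack (e.g., to absorb a two-sided statement or a median-versus-mean correction).

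The main obstacle is the invocation of Sudakov--Fernique, which is a standard but not entirely elementary result about Gaussian processes, proved via Gaussian interpolation and integration-by-parts. A self-contained alternative route uses $\varepsilon$-nets $\mathcal{N}_\varepsilon \subset S^{N-1}$ and $\mathcal{M}_\varepsilon \subset S^{n-1}$ of cardinality at most $(1 + 2/\varepsilon)^N$ and $(1+2/\varepsilon)^n$, combined with the scalar Gaussian tail bound for each $u^\top A v$ and a union bound, using the discretization identity $\opnorm{A} \leq (1-2\varepsilon)^{-1}\sup_{u \in \mathcal{N}_\varepsilon, v\in \mathcal{M}_\varepsilon} u^\top A v$; this route produces the same rate $\sqrt{N} + \sqrt{n} + t$ up to absolute constants but with a slightly less clean leading constant than the Gordon/Sudakov--Fernique approach.
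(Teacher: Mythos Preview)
Your proof is correct, and in fact follows essentially the same route as the cited reference (Vershynin's notes): Sudakov--Fernique comparison to bound the expectation by $\sqrt{N}+\sqrt{n}$, followed by Gaussian concentration for the $1$-Lipschitz functional $A\mapsto\opnorm{A}$. Note however that the paper itself does not prove this lemma at all; it simply quotes Corollary~5.35 of~\cite{vershynin2010introduction} as a black box. So there is nothing in the paper to compare against beyond the citation, and your argument supplies exactly the standard proof behind that citation.
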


\begin{lemma}[Norm of tensors]\label{lemma:tensor_norm}
Given a tensor $\mathcal{T}\in\mathbb{R}^{d_1\times d_2\times\cdots d_r}$, suppose there are $p$ subtensors of $\mathcal{T}$ such that $\sum_{k=1}^p \mathcal{T}_k = \mathcal{T}$. Then we have
\begin{align}
    \opnorm{\mathcal{T}} \leq \sum_{k=1}^p \opnorm{\mathcal{T}'_k},
\end{align}
where $\mathcal{T}'_k$ is upper left block of the rearrangement of the entries of $\mathcal{T}_k$ by elementary matrix operations so that all the entries of $\mathcal{T}_k$ are moved to the upper left corner. 

Specifically, $\begin{bmatrix} \mathcal{T}'_k &\zero \\
    \zero & \zero
    \end{bmatrix}$ is the output of elementary matrix operations on $\mathcal{T}_k$.
\end{lemma}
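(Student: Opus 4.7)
The plan is to reduce the claim to two elementary facts: (i) the tensor spectral norm satisfies a triangle inequality, and (ii) the spectral norm is invariant under both zero-padding along any axis and permutation of indices along any axis. Once these are in hand, the lemma follows immediately.

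\textbf{Step 1: Triangle inequality for the tensor spectral norm.} Starting from $\opnorm{\mathcal{T}} = \sup_{\|\rvv^i\|=1} \innert{\mathcal{T},\V}$ (\Cref{def:tensor_norm}) and using $\mathcal{T} = \sum_{k=1}^p \mathcal{T}_k$, linearity of $\innert{\cdot,\V}$ in its first argument gives
\begin{align*}
\opnorm{\mathcal{T}} = \sup_{\|\rvv^i\|=1} \sum_{k=1}^p \innert{\mathcal{T}_k,\V} \leq \sum_{k=1}^p \sup_{\|\rvv^i\|=1} \innert{\mathcal{T}_k,\V} = \sum_{k=1}^p \opnorm{\mathcal{T}_k}.
\end{align*}

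\textbf{Step 2: Invariance under permutations.} The rearrangement by elementary matrix operations amounts to independent permutations of the index set along each of the $r$ axes. If $\pi_1,\ldots,\pi_r$ are such permutations and $\mathcal{S}$ is the rearranged tensor, then $\mathcal{S}_{i_1,\ldots,i_r} = \mathcal{T}_{k,\pi_1(i_1),\ldots,\pi_r(i_r)}$, and for any unit vectors $\rvv^1,\ldots,\rvv^r$ the vectors $\tilde{\rvv}^\ell$ with entries $\tilde v^\ell_{i_\ell} = v^\ell_{\pi_\ell(i_\ell)}$ are still unit vectors and satisfy $\innert{\mathcal{S},\tilde\V} = \innert{\mathcal{T}_k,\V}$. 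Taking the sup on both sides shows $\opnorm{\mathcal{T}_k}=\opnorm{\mathcal{S}}$. Thus we may assume that the nonzero block of $\mathcal{T}_k$ is already in the upper-left corner, so that $\mathcal{T}_k$ is exactly $\mathcal{T}'_k$ zero-padded from shape $s_1^k\times\cdots\times s_r^k$ to shape $d_1\times\cdots\times d_r$.

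\textbf{Step 3: Invariance under zero-padding.} For the padded tensor $\mathcal{T}_k$, any unit vector $\rvv^\ell\in\mathbb{R}^{d_\ell}$ can be split as $\rvv^\ell = (\rvv^\ell_{\mathrm{head}},\rvv^\ell_{\mathrm{tail}})$, where $\rvv^\ell_{\mathrm{head}}\in\mathbb{R}^{s_\ell^k}$ is the restriction to the block indices. Since $\mathcal{T}_k$ vanishes outside the block,
\begin{align*}
\innert{\mathcal{T}_k,(\rvv^1,\ldots,\rvv^r)} = \innert{\mathcal{T}'_k,(\rvv^1_{\mathrm{head}},\ldots,\rvv^r_{\mathrm{head}})},\qquad \|\rvv^\ell_{\mathrm{head}}\|\leq\|\rvv^\ell\|=1.
\end{align*}
This shows $\opnorm{\mathcal{T}_k}\leq\opnorm{\mathcal{T}'_k}$. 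Conversely, any unit vectors $\rvu^\ell\in\mathbb{R}^{s_\ell^k}$ can be zero-extended to unit vectors in $\mathbb{R}^{d_\ell}$ without changing the inner product, giving $\opnorm{\mathcal{T}'_k}\leq\opnorm{\mathcal{T}_k}$. Hence $\opnorm{\mathcal{T}_k}=\opnorm{\mathcal{T}'_k}$.

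\textbf{Combining.} Steps 2 and 3 give $\opnorm{\mathcal{T}_k}=\opnorm{\mathcal{T}'_k}$ for every $k$, and plugging this into Step 1 yields $\opnorm{\mathcal{T}}\leq\sum_{k=1}^p\opnorm{\mathcal{T}'_k}$. There is no real obstacle here; the only place that requires some care is making the notion of ``subtensor'' precise, i.e.\ stating that each $\mathcal{T}_k$ is supported on a Cartesian-product block of indices $I_1^k\times\cdots\times I_r^k$, so that restricting the test vectors axis-wise is well defined. This is implicit in the description ``upper left block after elementary row/column operations,'' and once formalized the argument reduces to the three purely linear-algebraic steps above.
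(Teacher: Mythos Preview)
Your proof is correct and follows essentially the same approach as the paper: both argue (i) permutation/elementary-operation invariance of the tensor spectral norm, (ii) invariance under zero-padding, and (iii) a triangle-type inequality from the decomposition $\mathcal{T}=\sum_k\mathcal{T}_k$. Your write-up is in fact more careful than the paper's, which asserts (i) and (ii) without detail and labels step (iii) somewhat loosely as ``Cauchy--Schwarz.''
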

\begin{proof}[Proof of Lemma~\ref{lemma:tensor_norm}]
Since elementary matrix operations do not change the spectral norm of the matrix, we directly have 
\begin{align*}
    \opnorm{\mathcal{T}_k} = \opnorm{\begin{bmatrix} \mathcal{T}'_k &\zero \\
    \zero & \zero
    \end{bmatrix}}.
\end{align*}
And it is not hard to see that 
\begin{align*}
     \opnorm{\begin{bmatrix} \mathcal{T}'_k &\zero \\
    \zero & \zero
    \end{bmatrix}} = \opnorm{\mathcal{T}'_k}.
\end{align*}
Since $\sum_{k=1}^p \mathcal{T}_k = \mathcal{T}$, by Cauchy–Schwarz inequality, we finish the proof.

\end{proof}

\end{document}